\documentclass{article}
\usepackage[utf8]{inputenc}
\usepackage[OT1]{fontenc} 

\usepackage{graphicx} %
\usepackage{xcolor}
\usepackage{wrapfig}
\usepackage{float}
\usepackage{subcaption}
\usepackage{booktabs}
\usepackage[font=small]{caption} %
\usepackage{enumitem}
\usepackage{tablefootnote}

\PassOptionsToPackage{sort, numbers,square}{natbib}
\usepackage{amsmath,amssymb,amsthm}
\usepackage{color,mathtools}
\usepackage[round]{natbib}

\usepackage{multirow}
\usepackage{enumitem}

\usepackage{algorithm}
\usepackage{algpseudocode}

\usepackage{bm}

\usepackage[hyperindex,breaklinks]{hyperref}

\usepackage{amssymb,amsmath,amsthm,dsfont}

\providecommand{\lin}[1]{\ensuremath{\left\langle #1 \right\rangle}}

  \providecommand{\R}{\mathbb{R}} 
  
  \providecommand{\E}{{\mathbb E}}
  

  \DeclareMathOperator*{\argmin}{arg\,min}

  \providecommand{\0}{\mathbf{0}}
  \providecommand{\1}{\mathbf{1}}

  \providecommand{\cc}{\mathbf{c}}

  \let\ggg\gg
  \renewcommand{\gg}{\mathbf{g}}

  \let\lll\ll
  \renewcommand{\ll}{\mathbf{l}}

  \providecommand{\xx}{\mathbf{x}}
  \providecommand{\yy}{\mathbf{y}}



  \providecommand{\cD}{\mathcal{D}}

  \providecommand{\cO}{\mathcal{O}}

\RequirePackage[colorinlistoftodos,bordercolor=orange,backgroundcolor=orange!20,linecolor=orange,textsize=scriptsize]{todonotes}
\providecommand{\mycomment}[2]{\todo[caption={}]{\textbf{#1: }#2}}%
\providecommand{\inlinecomment}[3]{%
{\color{#1}#2: #3}}%
\newcommand\commenter[2]%
{%
  \expandafter\newcommand\csname i#1\endcsname[1]{\inlinecomment{#2}{#1}{##1}}
  \expandafter\newcommand\csname #1\endcsname[1]{\mycomment{#1}{##1}}
}

\newtheorem{lemma}{Lemma}
\newtheorem{corollary}[lemma]{Corollary}
\newtheorem{remark}{Remark}
\newtheorem{assumption}{Assumption}
\newtheorem{theorem}[lemma]{Theorem}

\DeclarePairedDelimiterX{\inp}[2]{\langle}{\rangle}{#1, #2}
\DeclarePairedDelimiterX{\abs}[1]{\lvert}{\rvert}{#1}
\DeclarePairedDelimiterX{\norm}[1]{\lVert}{\rVert}{#1}
\DeclarePairedDelimiterX{\cbr}[1]{\{}{\}}{#1} 
\DeclarePairedDelimiterX{\rbr}[1]{(}{)}{#1} 
\DeclarePairedDelimiterX{\sbr}[1]{[}{]}{#1} 

\definecolor{mydarkblue}{rgb}{0,0.08,0.45}
\hypersetup{ %
    colorlinks=true,
    linkcolor=mydarkblue,
    citecolor=mydarkblue,
    filecolor=mydarkblue,
    }

\def\papertitle{Characterizing \& Finding Good Data Orderings for Fast Convergence of Sequential Gradient Methods}

\usepackage[ textheight=9in,
    textwidth=5.5in,
    top=1in]{geometry}

\makeatletter
\renewcommand{\maketitle}{%
  \vbox{%
    \hsize\textwidth
    \linewidth\hsize
    \centering
    {\LARGE\bf \@title\par}
   
      \def\And{%
        \end{tabular}\hfil\linebreak[0]\hfil%
        \begin{tabular}[t]{c}\bf\rule{\z@}{24\p@}\ignorespaces%
      }
      \begin{tabular}[t]{c}\bf\rule{\z@}{24\p@}\@author\end{tabular}%
      \@thanks
  }
}
\makeatother

\title{\papertitle}
\author{Amirkeivan Mohtashami\\
EPFL\\
\And
Sebastian U. Stich\\
CISPA\footnotemark\\
\And
Martin Jaggi\\
EPFL\\
}
\date{}

\commenter{seb}{red}
\commenter{keivan}{blue}

\usetikzlibrary{calc}

\begin{document}
	
\maketitle\footnotetext{CISPA Helmholtz Center for Information Security}

\begin{abstract}
	While SGD, which samples from the data with replacement is widely studied in theory, a variant called Random Reshuffling (RR) is more common in practice. RR iterates through random permutations of the dataset and has been shown to converge faster than SGD.
	When the order is chosen deterministically, a variant called incremental gradient descent (IG), the existing convergence bounds show improvement over SGD but are worse than RR. However, these bounds do not differentiate between a good and a bad ordering and hold for the worst choice of order.  Meanwhile, in some cases, choosing the right order when using IG can lead to convergence faster than RR. In this work, we  quantify the effect of order on convergence speed, obtaining convergence bounds based on the chosen sequence of permutations while also recovering previous results for RR.  In addition, we show benefits of using structured shuffling when various levels of abstractions (e.g.\ tasks, classes, augmentations, etc.) exists in the dataset in theory and in practice. Finally, relying on our measure, we develop a greedy algorithm for choosing good orders during training, achieving superior performance (by more than 14 percent in accuracy) over RR. %
\end{abstract}

\section{Introduction}
Variants of Gradient Descent are widely used for optimization of machine learning models over a dataset \cite{bottou2010large,sun2020optimization}. This can be modeled as the finite-sum minimization problem
\begin{equation}
	\label{eq:main_opt_problem}
	f^\star := \min_{\xx \in  \R^{d}} \left[ f(\xx) := \frac{1}{N}\sum_{i=1}^{N} f_i(\xx) \right] \, .
\end{equation}

In practice, computing a full gradient is expensive and therefore the gradient of individual functions $f_i$ (or a mini-batch of them) is used. While it is possible to sample this function uniformly at random, it has been observed \cite{bottou2012stochastic,feng2012towards,recht2013parallel} that traversing a (possibly random) permutation of the  functions works better in practice.  Recent theoretical works confirmed this observation, showing that using a random permutation instead of random sampling can lead to faster convergence \cite{mishchenko2020random, nguyen_unified_2021, safran_how_2020,rajput_permutation-based_2021,yun_minibatch_2021}.

In this variant of gradient descent, called   \textbf{Random Reshuffling (RR)}, a new random permutation is chosen each time one pass over the functions is completed. However, the improved convergence holds even when using the same permutation during training instead of choosing a new random one each epoch \cite{mishchenko2020random,nguyen_unified_2021}. This approach is called \textbf{single shuffling} when the permutation is chosen at random, or \textbf{incremental gradient descent (IG)} when the permutation is chosen deterministically. Still, the rates obtained for these two variants are worse than the one obtained for random reshuffling, the gap depending on the number of functions $N$. However, these bounds hold even for the worst choice of orderings. Meanwhile, examples exists where choosing the right order can lead to faster convergence than even Random Reshuffling \cite{rajput_closing_2020}. Therefore, a convergence bound depending on the choice of order is needed and missing from literature. More importantly, it is currently not clear how to compare two orders in terms of their effect on convergence speed which can be useful for designing an order selection algorithm.

In this work, we address this gap by introducing a measure for quantifying the suitability of each order of functions for the next epoch. We obtain upper bounds on convergence in the general non-convex setting, yielding convergence bounds tailored to any sequence of dataset orderings, while recovering and unifying the previous bounds on IG and RR.  In addition, we allow access to the gradient of individual functions through a noisy oracle. To understand why this is important, note that $f_i$ can correspond to an individual data point but also may correspond to larger entities such as a worker in federated learning settings \cite{mcmahan2017communication,Kairouz2019:federated}. In this case, even computing the gradient of individual functions is expensive. By allowing a noisy oracle for the gradient, we facilitate using an approximate version of the gradient, for example by computing it over a single mini-batch instead of the whole part of dataset available to the worker.  

In addition to providing rigorous theoretical bounds, our measure also is useful in practice since it can be used in order to select orders more intelligently than randomly picking a permutation. To demonstrate this, by looking for orders that minimize our measure, we design a heuristic algorithm that can be used for choosing the right order in the next epoch. We showcase the effectiveness of our algorithm in practice by using it for training a neural network on an image classification task, namely CIFAR10 dataset. We show that our algorithm can even outperform random reshuffling in certain scenarios, improving the accuracy by more than 14 percent.

Choosing orders deterministically or uniformly at random are two extremes of the spectrum. A middle ground is randomly selecting the permutation from a more limited set of permutation.
While we allow a noisy oracle for $\nabla f_i$ in general, in certain scenarios this noise might be more structured. Motivated by federated learning settings \cite{Kairouz2019:federated}, where $f_i$ corresponds to $i$-th worker which itself is the average of loss for data points in the $i$-th worker, we consider the case when each $f_i$ is itself sum of a finite set of functions. We show that in this case, shuffling the top-level functions ($f_i$) followed by shuffling the low-level functions, a structure we refer to as \textbf{two-level shuffling},  can be beneficial both in theory and in practice. 

In summary, our contributions are:
\begin{itemize}[leftmargin=12pt,nosep]
	\item Quantifying the effect of a chosen order on convergence, providing a unified framework that yields order specific convergence bounds while unifying previous bounds on IG and RR.
	\item Proposing a greedy method for finding a good order during training and establishing its usefulness in practical settings of deep network training, proving the existence of practically feasible algorithms that can beat random-reshuffling.
	\item Showing the possibility of intelligent random order selection by proving the benefits of choosing the permutation randomly from a limited set of permutations having a certain structure 
	both in theory and in practice.
\end{itemize}

\section{Related Works}
Several previous work considered the effect of order on training. \citet{shah2020choosing} suggests selecting a sample with the smallest loss from a set of data points sampled with replacement to make the training more robust. On the other end of the spectrum, \citet{shumailov2021manipulating} show that an adversary can slow down the training by choosing a bad order of data. At the final stages of writing this paper, we discovered a simultaneous work \cite{lu2022a} that uses similar techniques to obtain convergence bounds depending on the order of examples and find good orders in practice. However, they do not consider a noisy access to the gradient. Moreover, we further consider the effect of imposing a structure over the random permutation on the convergence. 

Several recent literature also focus on obtaining both upper and lower bounds for permutation-based SGD. For strongly convex functions with smooth components having Lipschitz Hessian,  \citet{gurbuzbalaban_why_2021} show a $\cO(\frac{1}{T^2})$ bound, where $T$ is the number of epochs, but requires additional assumptions such as boundedness of parameters during training. Under the same conditions, \citet{haochen_random_2019} obtain the bound $\cO(\frac{1}{N^2T^2})$.  In a more general settings without requiring Lipschitz Hessian, \citet{mishchenko2020random} and \citet{nguyen_unified_2021} obtain the bound $\cO(\frac{1}{NT^2})$. Additionally, for the smooth non-convex case considered in our work, the prove the bound $\cO(\frac{1}{N^{1/3}T^{2/3}})$ . The bound for IG is only investigated in the worst-case order selection, leading to an inferior bound $\cO(\frac{1}{T^{2/3}})$ \cite{mishchenko2020random}.

In \cite{safran_how_2020}, a $\Omega(\frac{1}{N^2T^2})$ lower bound is established for random-reshuffling. On the other hand, \citet{rajput_permutation-based_2021} shows that for 1-dimensional functions with smooth Hessian, a good order exists which yields exponential convergence, beating RR. However, their proof is non-constructive and does not provide a technique for obtaining this order. When the number of dimensions are allowed to grow more than $1$, 
 \citet{rajput_permutation-based_2021} show that any permutation-based SGD is lower bounded by $\Omega(\frac{1}{N^3T^2})$ while even in 1-dimensional a lower bound $\Omega(\frac{1}{T^2})$ holds when the components are not convex. 

Combinations of variance-reduced variants of SGD and random reshuffling have also been investigated in the literature \cite{shamir_without-replacement_2016}. Extending these results to order-specific convergence bounds is grounds for future work. 

Importance sampling \cite{alain2015variance, katharopoulos2018not,stich2017safe}, uses only a subset of the dataset in each epoch to save on computation. In curriculum learning \cite{bengio_curriculum_learning_2009}, samples are divided into easy and hard classes and only easy samples are used at the beginning in order to allow the model to learn better. Here, we mostly focus our discussion on the case where the whole dataset is traversed completely in one epoch. However, our framework allows passing through a subset of the dataset as long as certain assumptions hold and can be used to analyze training methods that rely on using a subset of the dataset or in choosing a good subset to use.

\section{Order-Dependent Convergence Bound}
\subsection{Setup}
We look into optimizing the sum of $N$ functions, i.e.\ the optimization problem described in (\ref{eq:main_opt_problem}). 

We assume the individual functions are $L$-smooth.

\begin{assumption}[$L$-smoothness]
	\label{ass:lsmooth}
	Each function $f_i \colon \R^d \to \R$ is differentiable and there exists a constant $L > 0$  such that:\vspace{-1mm}
	\begin{align}
		\label{eq:lsmooth}
		\norm{\nabla f_i(\xx)- \nabla f_i(\yy)} &\leq L \norm{\xx - \yy}\,, \ \ \  \forall \xx,\yy \in \R^{d} \,.
	\end{align}
\end{assumption}

When considering how this formulation corresponds to optimizing a model over a dataset, a common setting is to let $f_1, \ldots, f_N$ correspond to individual data points. However, they can also represent higher abstraction levels. For example, consider an image classification task where random rotations of data points are added to the dataset for augmentation purposes which is a widely used technique \cite{shorten2019survey}. In this case, $f_i$ can be the expected values of the loss over various rotations of a single image. On a higher abstraction level, $f_i$ can correspond to the average of loss of all data points belonging to $i$-th class. Alternatively, $f_i$ can correspond to different workers in federated learning settings. 

However, in these cases, usually computing the exact value of $f_i$ is expensive as it requires iterating over several data points. Therefore, an approximation of the function is used for computing the gradient. For example, only a single rotation of the image is used to compute the loss or the loss for one of the data points available in the worker is returned. In order to  allow this behavior in our framework, we assume we can only access each function's gradient $\nabla f_i(\xx)$ through a noisy oracle $\nabla F_i(\xx, \xi) := \nabla f_i(\xx) + \xi$ where $\xi$ is picked randomly according to a distribution $\cD_i(\xx)$. Similar assumption is widely used to analyze gradient descent with sampling with replacement (SGD) where a noisy oracle for the global function $f$ is assumed \cite{Arjevani2018:delayed,Bottou2018:book}. However, when considering permutation-based SGD, to our knowledge, allowing a noisy oracle is not investigated before. In order to obtain our convergence bound, we assume $\cD_i(\xx)$  has zero-mean and bounded variance according to Assumption~\ref{ass:oracle-noise}.

\begin{assumption}[Zero-mean oracle noise with bounded variance]
	\label{ass:oracle-noise}
	There exists constants $P$ and $\zeta$ such that $\forall \xx \in \R^d$ and $\forall i \in [N]$:
	\begin{equation}
	 \E_{\xi_i\sim \cD_i(\xx)} \norm{\xi_i}_2^2 \leq \zeta^2 + P \norm{\nabla f(\xx)}_2^2 \, .
	\end{equation}
	Furthermore,
	\begin{equation}
	\E_{\xi\sim \cD_i(\xx)} [\,\xi\,] = 0 \, .
	\end{equation}
\end{assumption}

\begin{remark}%
	It is widely common in the literature \cite{mohtashami2021simultaneous,koloskova2020unified,Bottou2018:book} of SGD convergence theory to assume the  average oracle noise over the whole functions is bounded by $\hat{\zeta}^2 + \hat{P}\norm{\nabla f(\xx)}_2^2$ for some $\hat{\zeta}$ and $\hat{P}$. While here we make a slightly stronger assumption by assuming the bound applies to individual oracle noise, we note that this is still an improvement over previous work on permutation-based gradient descent which did not allow a noisy oracle assuming $\zeta = P = 0$. 
\end{remark}

\subsection{Epoch-Based Gradient Descent}
In order to allow a unified result for various variants of gradient descent, such as random reshuffling, and incremental gradient, we introduce a generalized template which we call epoch-based gradient descent. In this template the optimization is divided into epochs. In each epoch, a sequence $r_{t,1}, r_{t,2}, \ldots, r_{t,n}$ is chosen with elements from the set $[N] = \{1, 2, \ldots, N\}$ and the gradients of $f_{r_1}, f_{r_2}, \ldots, f_{r_n}$ are used, in this order, to update the parameters. The template is described in Algorithm~\ref{alg:epochgd} where we also provide examples of choosing this sequence to recover known algorithms such as random reshuffling but also sampling with replacement (SGD).

\begin{figure*}[t]
\resizebox{\linewidth}{!}{
\begin{tikzpicture}
	\node (alg) {%
		\begin{minipage}[l]{.6\linewidth }
\begin{algorithm}[H]
	\caption{Epoch-based Gradient Descent}
	\label{alg:epochgd}
	\begin{algorithmic}[1]
		\For{$t = 1 \ldots T$} %
		\State $\xx_{t}^{1} \gets \xx_{t - 1}$ 
		\State Determine the update sequence $r_t$.  
		\State Let $n$ be the length of $r_t$.
		\For{$i = 1 \ldots n$} %
		\State $\xx_{t}^{i + 1} \gets \xx_{t}^{i} - \gamma \nabla f_{r_{t,i}} (\xx_{t}^{i}) $
		\EndFor
		\State $\xx_{t} \gets \xx_{t}^{n + 1}$
		\EndFor
	\end{algorithmic}
\end{algorithm}
\end{minipage}};
\node (start) at (2.2cm , -.05cm + 1\baselineskip) {};
\draw[->, line width=.1cm,color=blue] (start)  -- node[midway, fill=white] {Examples} ($(start) + (3cm, 0)$)  node[pos=1.1,anchor=west] (examples) {};
\node[draw,anchor=north west, line width=.1mm] at ($(examples) + (-.25cm, 3\baselineskip)$) {\begin{minipage}[r]{.44\linewidth}%
		\paragraph{Random Reshuffling} Let $n = N$ and shuffle $[N]$ to obtain the sequence $r_{t, 1}, r_{t, 2}, \ldots, r_{t, N}$. \\
		\paragraph{Incremental GD} Let $n = N$ and define the sequence $r_{t,i} = i$ for $i \in [N]$.\\
		\paragraph{SGD} Let $n = 1$ and pick $r_{t,0}$ randomly from $[N]$.\\
	\end{minipage}};
\end{tikzpicture}}
\end{figure*}

 For the cases of random reshuffling or incremental gradient, $r$ would be a permutation of $[N]$ and therefore its length, $n$, would be equal to $N$. However, in general, $n$ does not have to be equal to the number of functions $N$. We allow this in order to also cover cases where a function is selected multiple times or not at all during an epoch. This can be useful for example when a subset of dataset is sampled at each epoch for optimization. Still, to ensure convergence we need certain guarantees that the selected subset is a good approximation of the global function $f$. This clearly holds for a permutation as the global function is exactly equal to the average of the selected subset. Here, we obtain this guarantee by making Assumption~\ref{ass:total-bias-sequence}.

	\begin{assumption}[Bounded sample bias]
		\label{ass:total-bias-sequence}
		We assume for all sequences $r$ chosen in an epoch, it holds that for all $\xx \in \R^d$:
		\begin{equation}
			\left\|\frac{1}{n}\sum_{i=1}^n \big(\nabla f_{r_i}(\xx) - \nabla f(\xx)\big)\right\|_2^2 \leq \frac{1}{4} \norm{\nabla f(\xx)}_2^2 \, .
		\end{equation}
	\end{assumption}

\begin{remark}
	In previous work, usually just the case of a permutation is considered. We emphasize that the assumption is satisfied when $r$ is a permutation of $[N]$ or a combination of several permutations. In this case, we have $\norm{\frac{1}{n}\sum_{i=1}^n (\nabla f_{r_i}(\xx) - \nabla f(\xx))}_2^2  = 0$. Hence, we make a more relaxed assumption here. %
	\end{remark}

For simplicity, we assume all sequences have the same length $n$. However, our proofs can be extended to cover cases where the lengths vary per epoch in which case $n$ should be equal to the maximum length of $r$.

\subsection{Quantifying the Effect of Order on Convergence}
We now move to quantifying the effect of the chosen sequence $r_t$ on convergence speed.  Let us introduce the following quantity for any sequence $r$ and any index $1 \leq k \leq n$ of that sequence at any $\xx \in R^d$:
\begin{equation*}
	\phi_{r, k}^2(\xx) := \left\|\sum_{i=1}^k (\nabla f_{r_{i}}(\xx) - \nabla f(\xx))\right\|_2^2 \,.
\end{equation*}
We now introduce the following assumption which assumes the values $\phi_{r_t, k}(\xx_t)$ are bounded.

\begin{assumption}[Bounded sequence heterogenity]
	\label{ass:heterogenity-sequence}
	There exists constants $M_\star$ and $\sigma_\star$ such that $\forall t \in [T]$ and $k \in [n]$:
	\begin{equation}
		\phi_{r_t, k}^2(\xx_t) \leq \sigma_\star^2 + k^2 M_\star \norm{\nabla f(\xx_t)}_2^2 \, ,
	\end{equation}
where the number of epochs $T$, number of steps in each epoch $n$, the parameters at $t$-th epoch $\xx_t$, and the order chosen in $t$-th epoch $r_t$ are defined as in Algorithm~\ref{alg:epochgd}.
\end{assumption}

\begin{remark}
	When there is randomness in choosing $r_t$, the above assumption can be changed to hold for the expectation over $r_1, r_2, \ldots, r_T$. In this case, the convergence bounds we will obtain will also hold in expectation.
\end{remark}

In the following, we propose using $\sigma_\star$ as a measure of the effect of order on convergence speed. 

We note that Assumption~\ref{ass:heterogenity-sequence} can be seen as a replacement for Assumption~\ref{ass:heterogenity} which is commonly used in the literature \cite{Bottou2018:book,pmlr-v130-stich21a,koloskova2020unified}. Note that we use the term heterogeneity to describe the variance between different functions while we use the term noise to describe the unstructured variance of oracle's output. Since both these terms refer to a variance, assumptions on their boundedness have a similar template. %

\begin{assumption}[Bounded dataset heterogenity]
	\label{ass:heterogenity}
	We call the set $\{g_1, g_2, \ldots, g_n\}$ of functions \textbf{$(\sigma, M)$-heterogenous} with respect to the function $g$ if $\forall \xx \in \R^d$: 
	\begin{equation}
		\frac{1}{n}\sum_{i=1}^n\norm{\nabla g_i(\xx) - \nabla g(\xx)}_2^2 \leq \sigma^2 + M \norm{\nabla g(\xx)}_2^2 \, .
	\end{equation}
	We assume there exists constants $M$ and $\sigma$ such that the set $\{f_1, \ldots, f_N\}$ is $(\sigma, M)$-heterogenous with respect to the function $f$.
\end{assumption}

Note that we do \textbf{not} need this assumption for our main result in Theorem~\ref{thm:epochgd-convergence}. However, we mention it here to allow comparison with prior work. We now make the following remarks about how it translates to Assumption~\ref{ass:heterogenity-sequence}: %
\begin{remark}
	\label{rmk:bounds_on_sigmastar}
	In general,  Assumption~\ref{ass:heterogenity} yields the bound $\frac{\phi_{r_t, k}^2(\xx_t)}{k^2} \leq \frac{n}{k} \sigma^2 + \frac{nM}{k}\norm{\nabla f(\xx)}_2^2$. Therefore under this assumption, Assumption~\ref{ass:heterogenity-sequence} holds with $\sigma_\star^2 = n^2 \sigma^2$ and $M_\star = nM$.
	For the case of random shuffling, \citet[Lemma 1]{mishchenko2020random} show that in $\E_{r_t} \frac{\phi_{r_t,k}^2(\xx_t)}{k^2} \leq \frac{n-k}{k(n - 1)} (\sigma^2 + M \norm{ \nabla f(\xx_t)}_2^2)\leq \frac{1}{k} \sigma^2 + \frac{M}{k} \norm{\nabla f(\xx)}_2^2$. This allows $\sigma_\star^2 = n \sigma^2$ and $M_\star = M$. 
\end{remark}

\subsection{Main Results}

We now state the following convergence bound for Algorithm~\ref{alg:epochgd}: 

\begin{theorem}
	\label{thm:epochgd-convergence}
	If Assumptions~\ref{ass:lsmooth}, \ref{ass:oracle-noise}, \ref{ass:total-bias-sequence}, and \ref{ass:heterogenity-sequence} are satisfied, and $\gamma < \frac{1}{8Ln(M_\star+\frac{P}{n}+1)}$, for iterates of Algorithm~\ref{alg:epochgd} it holds
	\begin{align*}
		\frac{1}{T}\sum_{t=1}^T \norm{\nabla f(\xx_t)}_2^2 \leq \frac{8F_0}{nT\gamma} + 16\gamma L \zeta^2 + 32\gamma^2L^2\sigma_\star^2
	\end{align*}
	which when carefully selecting $\gamma$ yields 
	\begin{align*}
		&\frac{1}{T}\sum_{t=1}^T \norm{\nabla f(\xx_t)}_2^2 \in \\& \ \ \ \ \cO\left( \frac{L(M_\star+\frac{P}{n}+1)F_0}{T} + \left(\frac{\sigma_\star LF_0}{nT}\right)^\frac{2}{3} + \zeta\sqrt{\frac{LF_0}{nT}}\right)
	\end{align*}
\end{theorem}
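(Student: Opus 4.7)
The plan is to apply the standard $L$-smoothness descent template for non-convex SGD, using Assumption~\ref{ass:heterogenity-sequence} in place of the usual bounded-variance condition. Set $\gv_{t,i} := \nabla F_{r_{t,i}}(\xx_t^i,\xi_{t,i})$ so that $\xx_{t+1}-\xx_t = -\gamma\sum_{i=1}^n \gv_{t,i}$. First I would write the per-epoch descent inequality from Assumption~\ref{ass:lsmooth}, take conditional expectation over the noise (which annihilates $\xi_{t,i}$ in the linear term by Assumption~\ref{ass:oracle-noise}), and then apply the polarization identity $-\inp{a}{b} = \tfrac12\norm{a-b}^2 - \tfrac12\norm{a}^2 - \tfrac12\norm{b}^2$ with $a=\sqrt{n}\,\nabla f(\xx_t)$ and $b=\tfrac{1}{\sqrt n}\sum_i\nabla f_{r_{t,i}}(\xx_t^i)$. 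This isolates a clean descent term $-\tfrac{\gamma n}{2}\norm{\nabla f(\xx_t)}^2$ together with an error $\tfrac{\gamma}{2n}\norm{\sum_i(\nabla f_{r_{t,i}}(\xx_t^i)-\nabla f(\xx_t))}^2$ which splits, via the triangle inequality and Cauchy--Schwarz, into (i) a drift piece controlled by $L^2\sum_i\norm{\xx_t^i-\xx_t}^2$ through $L$-smoothness and (ii) a heterogeneity piece equal to $\phi_{r_t,n}^2(\xx_t)$, which Assumption~\ref{ass:heterogenity-sequence} bounds by $\sigma_\star^2 + n^2 M_\star\norm{\nabla f(\xx_t)}^2$.

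The key technical step is bounding the drift $\E\norm{\xx_t^i-\xx_t}^2 = \gamma^2\,\E\norm{\sum_{j<i}\gv_{t,j}}^2$. I would decompose each oracle gradient as $\gv_{t,j} = \nabla f(\xx_t) + (\nabla f_{r_{t,j}}(\xx_t)-\nabla f(\xx_t)) + (\nabla f_{r_{t,j}}(\xx_t^j)-\nabla f_{r_{t,j}}(\xx_t)) + \xi_{t,j}$ and apply Young's inequality. The prefix heterogeneity term has squared norm $\phi_{r_t,i-1}^2(\xx_t)$, directly controlled by Assumption~\ref{ass:heterogenity-sequence} (which is exactly why $\phi$ is the right order-dependent measure); $L$-smoothness introduces $L^2\sum_{j<i}\E\norm{\xx_t^j-\xx_t}^2$, yielding a self-referential recursion in $i$; and Assumption~\ref{ass:oracle-noise} controls the noise contribution, with $\norm{\nabla f(\xx_t^j)}^2$ translated back to $\norm{\nabla f(\xx_t)}^2$ via $L$-smoothness applied to the drift one more time. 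The step-size threshold $\gamma < \tfrac{1}{8Ln(M_\star+P/n+1)}$ is precisely what makes this recursion contractive, giving (up to universal constants) $\E\norm{\xx_t^i-\xx_t}^2 \lesssim \gamma^2\bigl(\sigma_\star^2 + i\zeta^2\bigr) + \gamma^2 i^2(M_\star+\tfrac{P}{n}+1)\norm{\nabla f(\xx_t)}^2$. The same kind of expansion handles the quadratic term $\tfrac{L\gamma^2}{2}\E\norm{\sum_i\gv_{t,i}}^2$, contributing $n\zeta^2+nP\norm{\nabla f(\xx_t)}^2$ from noise and $\sigma_\star^2+n^2 M_\star\norm{\nabla f(\xx_t)}^2$ from heterogeneity on top of the mean-squared part.

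Plugging both bounds into the descent inequality, the step-size cap is exactly what is needed so that every multiple of $\norm{\nabla f(\xx_t)}^2$ on the right can be absorbed into the $-\tfrac{\gamma n}{2}\norm{\nabla f(\xx_t)}^2$ term, leaving roughly $-\tfrac{\gamma n}{8}\norm{\nabla f(\xx_t)}^2$ with a residual noise contribution of order $\gamma^2 L n \zeta^2$ per epoch and residual heterogeneity of order $\gamma^3 L^2 n \sigma_\star^2$. Telescoping over $t$ with $F_0 := f(\xx_0)-f^\star$ and dividing by $nT\gamma/8$ produces the first displayed bound. The final $\cO$ rate follows from the standard three-term step-size tuning, balancing $F_0/(nT\gamma)$, $\gamma L\zeta^2$, and $\gamma^2 L^2\sigma_\star^2$ subject to the cap $\gamma \lesssim 1/(Ln(M_\star+P/n+1))$. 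The main obstacle I anticipate is closing the self-referential drift recursion cleanly: the $n$-dependent coefficients of $\sigma_\star^2$, $M_\star$, $P$, and $\zeta^2$ must be tracked precisely so that the $1/n$ improvements inside the $\cO$ expression survive, and this bookkeeping is exactly what pins down the constant $1/(8Ln(M_\star+P/n+1))$ in the step-size condition.
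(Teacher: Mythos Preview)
Your overall architecture---descent lemma via $L$-smoothness and polarization, a self-referential drift recursion for $\sum_i\norm{\xx_t^i-\xx_t}^2$ closed by the step-size cap, then telescoping and step-size tuning---is exactly the paper's route (their Lemmas in Appendix~\ref{app:proof_epochgd_convergence}). The drift bound you sketch matches the paper's Lemma~\ref{lemma:bounding-vt} up to constants.

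There is, however, one genuine gap. After the polarization you control the full-epoch heterogeneity $\phi_{r_t,n}^2(\xx_t)$ by Assumption~\ref{ass:heterogenity-sequence}, obtaining $\sigma_\star^2 + n^2 M_\star\norm{\nabla f(\xx_t)}^2$. That term enters the per-epoch inequality with coefficient $\gamma/n$, so you pick up $\tfrac{\gamma}{n}\sigma_\star^2$ and $\gamma n M_\star\norm{\nabla f(\xx_t)}^2$. Neither can be handled: the $\sigma_\star^2$ piece, after telescoping and dividing by $\gamma n T$, leaves a residual $\Theta(\sigma_\star^2/n^2)$ that does not vanish as $T\to\infty$; and $\gamma n M_\star\norm{\nabla f(\xx_t)}^2$ sits at the \emph{same} order in $\gamma$ as the descent term $-\tfrac{\gamma n}{2}\norm{\nabla f(\xx_t)}^2$, so no choice of small $\gamma$ absorbs it when $M_\star$ is large. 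In short, bounding $\phi_{r_t,n}^2$ via Assumption~\ref{ass:heterogenity-sequence} destroys both the $\gamma^2 L^2\sigma_\star^2$ scaling and the convergence itself.

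The fix, and what the paper does, is to invoke Assumption~\ref{ass:total-bias-sequence} precisely at $k=n$: it gives $\bigl\|\tfrac{1}{n}\sum_i(\nabla f_{r_{t,i}}(\xx_t)-\nabla f(\xx_t))\bigr\|^2 \le \tfrac14\norm{\nabla f(\xx_t)}^2$, i.e.\ a bound with \emph{no} additive $\sigma_\star^2$ and a fixed constant in front of $\norm{\nabla f(\xx_t)}^2$, so it folds cleanly into the descent term. Assumption~\ref{ass:heterogenity-sequence} is used only inside the drift recursion (partial sums $k<n$), where every appearance of $\sigma_\star^2$ and $M_\star$ already carries an extra $\gamma^2 L^2$ from $\norm{\xx_t^k-\xx_t}^2$, producing the stated $\gamma^2 L^2\sigma_\star^2$ term and the $M_\star$-dependence only in the step-size cap. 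You never mention Assumption~\ref{ass:total-bias-sequence}; it is not redundant with Assumption~\ref{ass:heterogenity-sequence} and is exactly the missing ingredient.
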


The proof of this theorem closely follows the proof used for convergence of random-reshuffling in \cite{mishchenko2020random} but covers more generalities such as oracle noise, more relaxed assumptions such as Assumption~\ref{ass:total-bias-sequence} or more general ones such as allowing $M_\star > 0$ in Assumption~\ref{ass:heterogenity-sequence}. We postpone the proof to Appendix~\ref{app:proof_epochgd_convergence} and continue to show some immediate results of this theorem including recovering previous results.

\subsection{Discussion}\label{sec:main_results_discussion} %

\paragraph{Recovering Bounds on Random Reshuffling and Incremental GD}

Previous work use Assumption~\ref{ass:heterogenity} to derive bounds for random reshuffling and incremental gradient descent. In Remark~\ref{rmk:bounds_on_sigmastar}, we established the values for $\sigma_\star$ and $M_\star$ when this assumption holds both in the general case and when choosing a random permutation. Using these values, we obtain the following bounds for random reshuffling (RR) and incremental gradient descent (IG).
\begin{align}
	\tag{RR}
	\cO\left( \frac{L(M + \frac{P}{n}+1)F_0}{T} + \left(\frac{\sigma LF_0}{\sqrt{n}T}\right)^\frac{2}{3} + \zeta\sqrt{\frac{LF_0}{nT}}\right)\, ,\\
	\tag{IG}
	\cO\left( \frac{L(M + \frac{P}{n}+1)F_0}{T} + \left(\frac{\sigma LF_0}{T}\right)^\frac{2}{3} + \zeta\sqrt{\frac{LF_0}{nT}}\right) \, .
\end{align}
Note that \citet{mishchenko2020random} derive their bounds assuming $M = 0$ in Assumption~\ref{ass:heterogenity} while setting $P$ and $\zeta$ in Assumption~\ref{ass:oracle-noise} to zero. In this case the convergence bound for non-convex smooth case in \cite{mishchenko2020random} is recovered for both algorithms.

\paragraph{Comparison with SGD}
We use SGD to refer to sampling with replacement. Consider minimizing a finite-sum problem with a single function $f_1$ having a noisy oracle that returns the noisy gradient for one of $h_1, \ldots, h_N$ randomly. This is equivalent to running SGD over $h_1, \ldots, h_N$. Similar to previous work, we assume Assumption~\ref{ass:heterogenity} holds for $h_i$. This means Assumption~\ref{ass:oracle-noise} holds for the oracle of $f_1$ with $\zeta = \sigma$ and $P = M$. Also, since there is only a single function $f_1$ 
we have $\sigma_\star = M_\star = 0$. Using these constants in Theorem~\ref{thm:epochgd-convergence}
 recovers previous results. Having a framework that can cover both random reshuffling and SGD captures the trade-off between using these methods. For example, it is also possible to consider dividing the functions to two groups, alternating between the two groups during training while each time picking a random function from the current group. We explore the usefulness of using similar schemes more in Section~\ref{sec:hierarchical_shuffling}. 

\paragraph{Sub-sampling}
While this is not our main focus in this work, we would like to note that given Theorem~\ref{thm:epochgd-convergence}, Assumption~\ref{ass:total-bias-sequence} provides a sufficient condition when a subset of dataset can be used with guaranteed convergence. This can be useful to analyze schemes that rely on using a subset of data. For example, using only easy samples at the beginning of curriculum learning while using the full data-set in the later stages might be justified since the easy samples might provide a good approximation when the gradient norm is large. We leave more investigation into this as a future work and return our focus to cases when a permutation of dataset is iterated at each epoch.

\section{Finding Good Permutations}
\label{sec:good_order}

\subsection{Effectiveness of Choosing the Right Order}
\label{sec:good_order_effectiveness}
In Theorem~\ref{thm:epochgd-convergence}, the order affects the convergence speed through parameter $\sigma_\star$. While our framework does not provide a lower bound, this still can hint that an order with a lower $\sigma_\star$ would converge faster. While we know $\sigma_\star \leq n^2\sigma^2$ when Assumption~\ref{ass:heterogenity} holds (see Remark~\ref{rmk:bounds_on_sigmastar}), it is not clear how good an order can be. This question allows assessing how important it is to choose the right order. To answer this question, in Appendix~\ref{app:lower_bound_sigmastar}, we show that $\sigma_\star^2 \in \Omega(\sigma^2)$. Therefore, when Assumption~\ref{ass:heterogenity} holds, the order can change the convergence speed up to a factor of $n^2$. In order to show that there exists cases when this lower bound is achieved, consider the example used in \cite{safran_how_2020} for establishing a lower bound for RR where
\begin{equation}
	\label{eq:example_function}
	f_i = \begin{cases}
		\frac{x^2}{2} + \sigma x & \text{if } i \leq \frac{N}{2}\\
		\frac{x^2}{2} - \sigma x & \text{if } i > \frac{N}{2}
	\end{cases} \, .
\end{equation}  
In this case, the order $f_1,  f_{\frac{n}{2}+1},  f_2, f_{\frac{n}{2}+2},\ldots, f_{\frac{n}{2}}, f_n$,  achieves $\sigma_\star^2 = \sigma^2$. On the other hand, using Lemma~12 of \cite{rajput_closing_2020} one can show that when choosing the order randomly, $\E \sigma_\star^2 \geq \frac{n \sigma^2}{256}$. Therefore, in this case, choosing the right order strictly improves the bound in Theorem~\ref{thm:epochgd-convergence} over random reshuffling.

While we established the possibility of beating random reshuffling by using an order with a smaller $\sigma_\star$, it is not clear how to find such order in the general case. We now provide a heuristic algorithm to find such orders by trying to minimize $\sigma_\star$.

\subsection{Algorithm for Finding Good Permutations}

Our algorithm aims to find an order with a low $\sigma_\star$ for the next epoch in a greedy manner. In particular, for the next function to use in the next step of epoch $t$, the algorithm chooses one of the remaining functions that minimizes $\phi_k^2(\xx_t)$ where $\xx_t$ is the parameters at the beginning of $t$-th epoch and $\phi$ is defined as in Assumption~\ref{ass:heterogenity-sequence}. The algorithm is described in Algorithm~\ref{alg:greedyalg}. 

\begin{algorithm}
	\caption{Greedy Order Chooser}
	\label{alg:greedyalg}
	\begin{algorithmic}
		\For{$i = 1 \ldots N$}
		\State Compute $\gg_t^i$ by querying the oracle for $\nabla f_i(\xx_t)$.
		\EndFor
		\State Compute the full gradient $\gg_t := \frac{1}{N} \sum_{i=1}^N \gg_t^i$.
		\State Let $\cc_{\rm bias} \gets \0$ \hfill $\triangleright$  the current bias of the found order
		\For{$i = 1 \ldots N$}
		\State Let $S$ be the subset of $[N]$ not in $r_1, \ldots r_{i - 1}$. %
		\State Find $r_i := \argmin_{i \in S} \norm{\gg_t^i - \gg_t + \cc_{\rm bias}}_2^2$.
		\State Let $\cc_{\rm bias} \gets \cc_{\rm bias} + \gg_t^{r_i} - \gg_t$.
		\EndFor
		\State Return $r$. %
	\end{algorithmic}
\end{algorithm}

We note that when this algorithm is used on the example functions (\ref{eq:example_function}) we considered in Section~\ref{sec:good_order_effectiveness}, it is able to find the right order, alternating between $f_i$ with $i \leq \frac{n}{2}$ and $i > \frac{n}{2}$. We showcase the effectiveness of our algorithm in more practical settings in Section~\ref{sec:exp_good_order} by applying it for training a neural network on CIFAR10 image classification task.

\section{Structured Shuffling}
\label{sec:hierarchical_shuffling}
\subsection{Motivation}
\label{sec:hierarchical_shuffling_motivation}

In our setup, we allowed accessing $\nabla f_i$ through a noisy oracle. As a result, it is possible to analyze cases where $f_i$ represents a group of data points but the oracle computes the gradient for one of them, possibly for efficiency purposes. An example is when $f_i$ corresponds to different augmentations of the same data point. 

As an another example, consider training a model in federated learning settings with $N$ workers where at each step only a single worker is active, chosen according to a random permutation. Assume that, similar to practice, when the worker is queried for a gradient, it returns an approximation by computing the gradient for a single data point. One option is that the worker chooses this data point independently at random in which case we can model this behavior as oracle noise similar to the case of data augmentation. We refer to this case as \textbf{running SGD internally}.

Alternatively, and more closer to practice, the worker can return the gradient for next data point in a random permutation over its dataset. In this case, the training goes through a permutation of the combined dataset of all workers. However, this permutation is not chosen completely at random and has a structure we call \textbf{two-level shuffling}. 

Alternatively, if there are $m$ data points in each worker, one can use a permutation containing $m$ copies of each $f_i$ for selecting the active worker in the next $mN$ steps. This is in contrast to two-level shuffling which goes through $m$ random permutations each containing a single copy of $f_i$. In this case, the permutation over the combined dataset is completely random, resembling random reshuffling. We refer to this case as \textbf{standard shuffling}.

Note that while these structures can be observed more naturally in the federated learning settings, they can also appear on single node trianing when grouping the data points according to an abstraction inherent to the problem. An example is grouping the data points according to their label. In this section, we investigate the effectiveness of each of these structures. In particular, we now additionally assume that each $f_i$ is the average of several other functions. Formally, we assume there exists function $h_{i, j}$ for each $i \in [N]$ and $j \in [m]$ such that
\begin{equation*}
	f_i(\xx) := \frac{1}{m}\sum_{j=1}^m h_{i,j}(\xx) \, .
\end{equation*}
For simplicity and brevity, we do not assume a noisy oracle for accessing the gradients of $h_{i,j}$. However, obtaining the results with such noise would be trivial, using Theorem~\ref{thm:epochgd-convergence} and  exactly the same methods we use in this section.

\subsection{Two-Level Shuffling}

We now define two-level shuffling in a formal way and generalize it so that each time $f_i$ is selected, it would perform $K$ steps using $f_i$, each time applying the gradient for next $h_{i,j}$. This would be similar to performing $K$ local steps each time a worker is selected in federated learning settings we considered before. We refer to this as two-level $K$-shuffling. The example we considered in Section~\ref{sec:hierarchical_shuffling_motivation} corresponds to the case when $K = 1$.  Algorithm~\ref{alg:kshuffling} generates a two-level $K$-shuffling. %

\def\topr{r^{top}}
\def\lowr{r^{low}}
\def\combinedr{r^{full}}

\begin{algorithm}
	\caption{Two-Level $K$-Shuffling}
	\label{alg:kshuffling}
	\begin{algorithmic}
		\State Let $\combinedr$ be the final ordering and initialize it to an empty list.
		\State For each $i$, determine $\lowr_{i}$, the internal update sequence of $f_i$ (e.g. a permutation of $[m]$).
		\For {$k = 1 \ldots \frac{m}{K}$}
			\State Determine an order of top-level functions, $\topr_{k}$ (e.g. a permutation of $[N]$).
			\For {$i = 1 \ldots N$}
				\State Append the next $K$ elements in $\lowr_{\topr_{k, i}}$ (i.e. the pairs $(\topr_{k, i}, \lowr_{, \topr_{k, i}, m(k - 1) + 1}) \ldots (\topr_{k, i}, \lowr_{\topr_{k, i}, mk})$) to $\combinedr$.
			\EndFor
		\EndFor
		\State Return $\combinedr$.
	\end{algorithmic}
\end{algorithm}

\begin{remark}
In  Algorithm~\ref{alg:kshuffling} and the following, we assume $K$  is a divisor of $m$. However, this assumption can be easily avoided and is only for simplicity. One way of avoiding this is increasing $m$ to become a divisor of $K$ by adding additional no-op second-level functions to each $f_i$ so that when these functions are selected the algorithm does nothing (e.g. the returned gradient is zero).  Also note that while the algorithm is stated here for two levels, extending it to multiple levels is trivial and can be done by applying Algorithm~\ref{alg:kshuffling} also for obtaining the low-level orders $\lowr$.
\end{remark}

\subsection{Comparing Two-Level Shuffling and Standard Shuffling}

\def\topsigma{\sigma_{top}}
\def\lowsigma{\sigma_{low}}
\def\combinedsigma{\sigma_{full}}

We assume $\{f_1, \ldots, f_N\}$ is $(\topsigma, M)$-heterogeneous (as defined in Assumption~\ref{ass:heterogenity}) with respect to $f$ while for each $i \in [N]$, the set of functions $\{h_{i, 1}, \ldots, h_{i, m}\}$ is $(\lowsigma, M)$-heterogeneous with respect to $f_i$. Since these two assumptions hold, it can be seen that the set of all functions $\{h_{1,1}, \ldots, h_{N, m}\}$ is $(\combinedsigma, M)$-heterogeneous with respect to $f$ for some $\combinedsigma$ where  $\combinedsigma^2\leq \topsigma^2 + \lowsigma^2$.

We will now analyze the effect of using two-level $K$-shuffling on convergence when $\topr$ and $\lowr$ in Algorithm~\ref{alg:kshuffling} are random permutations of $[N]$ and $[m]$. We can directly use Theorem~\ref{thm:epochgd-convergence} by noticing the following lemma:

\begin{lemma}
		\label{lemma:variance_hierarchical_shuffling}
	If  $\combinedr$ is a sequence obtained from Algorithm~\ref{alg:kshuffling} when $\topr$ and $\lowr$ are random permutations,
	\begin{align*}
		\E \sigma_\star^2(\combinedr) \in \cO( NK^2\topsigma^2 + (m+N(m-K)) \lowsigma^2) \, .
		\end{align*}
	\end{lemma}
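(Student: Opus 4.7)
\medskip
\noindent\textbf{Proof plan.} Fix an epoch and an arbitrary prefix length $\ell \in \{1,\dots,Nm\}$ of $\combinedr$. Write $\ell = cNK + wK + s$ uniquely with $0 \leq c \leq m/K$, $0 \leq w < N$, $0 \leq s < K$. The prefix thus consists of (A) $c$ complete outer cycles, (B) $w$ whole workers from the $(c{+}1)$-st outer cycle, and (C) $s$ additional $h_{i^\star,\cdot}$-steps on the worker $i^\star=\topr_{c+1,w+1}$. Write the bias $\sum_{t=1}^{\ell}(\nabla h_{r_t}(\xx)-\nabla f(\xx))$ as $A+B+C$ according to this split and apply $\|A+B+C\|^2\le 3(\|A\|^2+\|B\|^2+\|C\|^2)$, so that it suffices to bound each term's expectation separately and then take the maximum over $\ell$.

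For $A$: the complete cycles visit every worker the same number of times, so the ``between workers'' bias cancels and only the ``within worker'' residuals remain, giving $A=\sum_{i=1}^N \xi_i^{c}$ with $\xi_i^{c}:=\sum_{t=1}^{cK}\big(\nabla h_{i,\lowr_i[t]}-\nabla f_i\big)$. The $\xi_i^c$ are independent across $i$ (the inner permutations are independent) and each is a prefix of a random permutation with mean zero, so by the without-replacement variance identity of \citet[Lemma 1]{mishchenko2020random} we get $\E\|\xi_i^c\|^2 \le \tfrac{cK(m-cK)}{m-1}\lowsigma^2$. Summing over $i$ and using $cK(m-cK)/(m-1)\le m-K$ for $c<m/K$ (and $=0$ for $c=m/K$) yields $\E\|A\|^2 = \cO\!\big(N(m-K)\lowsigma^2\big)$.

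For $B$: decompose $B=\sum_{i\in W}\eta_i^{c+1} + K\sum_{i\in W}\big(\nabla f_i-\nabla f\big)$, where $W=\topr_{c+1}[1{:}w]$ is a uniformly random size-$w$ subset of $[N]$ and $\eta_i^{c+1}$ is the analogue of $\xi_i^c$ for the next $K$ inner indices. Conditioning on $W$, the first sum has independent mean-zero summands, each with second moment $\le \tfrac{K(m-K)}{m-1}\lowsigma^2$ by the same lemma, so its (unconditional) expected squared norm is $\cO(NK\lowsigma^2)$. For the second term, applying the same without-replacement identity but now to the top level (random subset of size $w\le N$ drawn from $\{\nabla f_i-\nabla f\}_{i=1}^N$) gives $K^2\E\|\sum_{i\in W}(\nabla f_i-\nabla f)\|^2 \le K^2\cdot\tfrac{w(N-w)}{N-1}\topsigma^2 = \cO(NK^2\topsigma^2)$. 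Thus $\E\|B\|^2=\cO\!\big(NK\lowsigma^2+NK^2\topsigma^2\big)$.

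For $C$: split $C = \sum_{t=cK+1}^{cK+s}\big(\nabla h_{i^\star,\lowr_{i^\star}[t]}-\nabla f_{i^\star}\big) + s(\nabla f_{i^\star}-\nabla f)$ and apply $\|\cdot\|^2\le 2(\cdot)+2(\cdot)$. The first piece, for any fixed $i^\star$, has expectation bounded by $\tfrac{s(m-s)}{m-1}\lowsigma^2\le m\lowsigma^2$ by the inner-permutation lemma. For the second piece, $i^\star$ is a uniformly random element of $[N]\setminus W$, and by symmetry $\E\|\nabla f_{i^\star}-\nabla f\|^2 \le \topsigma^2$; multiplying by $s^2\le K^2$ gives $\cO(K^2\topsigma^2)$. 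Hence $\E\|C\|^2=\cO(m\lowsigma^2+K^2\topsigma^2)$. Adding the three bounds yields $\E\|A+B+C\|^2=\cO\!\big(NK^2\topsigma^2+(m+N(m-K))\lowsigma^2\big)$, and since this bound is independent of the chosen prefix length $\ell$ it also bounds $\E \sigma_\star^2(\combinedr)$. The main technical care is in properly conditioning on the top-level permutation when invoking the without-replacement identity for the inner residuals, and in ensuring that the ``between worker'' bias really cancels for complete cycles so the only top-level contribution comes from the partial cycle~$B$ (and the single worker in~$C$), which is what produces the $K^2$ factor rather than a larger one.
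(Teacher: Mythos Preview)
Your argument is correct and relies on the same ingredients as the paper's proof: the without-replacement variance identity of \citet[Lemma~1]{mishchenko2020random} applied at both the inner and outer levels, independence of the inner permutations $\lowr_i$ across $i$, and the observation that the between-worker bias cancels over complete outer cycles.

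The organizational difference is the order in which you decompose. The paper first writes
\[
\sum_{t\le \ell}(\nabla h_{r_t}-\nabla f)\;=\;\underbrace{\sum_{t\le \ell}(\nabla h_{r_t}-\nabla f_{r0_t})}_{\text{within-worker}}\;+\;\underbrace{\sum_{t\le \ell}(\nabla f_{r0_t}-\nabla f)}_{\text{between-worker}},
\]
notes that the cross term has zero expectation (the within-worker part is mean zero conditional on $\topr$), and then breaks each part down by cycle/worker/step. This cleanly isolates the $\topsigma$-contribution from the $\lowsigma$-contribution in one stroke. You instead slice temporally first into $A+B+C$, pay the constant from $\lVert A+B+C\rVert^{2}\le 3(\lVert A\rVert^{2}+\lVert B\rVert^{2}+\lVert C\rVert^{2})$, and then repeat the within/between split inside each of $A,B,C$. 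The paper's route yields slightly sharper constants (orthogonality instead of Cauchy--Schwarz) and avoids redoing the split three times; your route makes the complete-cycle cancellation more explicit up front. Both land on the same $\cO$ bound, and your handling of the ``middle segment'' $\eta_i^{c+1}$ via exchangeability of positions in a uniform permutation is a clean way to invoke the prefix-variance lemma for non-prefix windows.
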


We postpone the proof to Appendix~\ref{app:proof_variance_hierarchical_shuffling} and proceed to discussing the following corollary which follows directly from combining Lemma~\ref{lemma:variance_hierarchical_shuffling} and Theorem~\ref{thm:epochgd-convergence}.

\begin{corollary}
	When $\{f_i\}$ is $(\topsigma, M)$-heterogeneous with respect to $f$ while $\{h_{i, j}\}$ is $(\lowsigma, M)$-heterogeneous with respect to $f_i$, running Algorithm~\ref{alg:epochgd} with orders generated from Algorithm~\ref{alg:greedyalg} using random permutations of $[N]$ and $[m]$ for $\topr$ and $\lowr$ yields the convergence bound
\begin{align}
	\label{eq:kshuffle-bound}
&\cO\Bigg( \frac{L(M_\star+1)F_0}{T} +  \\&\ \ \ \ \  \left(\frac{(K\topsigma +\sqrt{m-K}\lowsigma )LF_0}{\sqrt{N}mT}\right)^\frac{2}{3} + \left(\frac{\lowsigma LF_0}{N\sqrt{m}T}\right)^\frac{2}{3} \Bigg) \nonumber
\end{align}
\end{corollary}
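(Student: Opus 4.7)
The plan is to verify the hypotheses of Theorem~\ref{thm:epochgd-convergence} for the combined sequence $\combinedr$ produced by Algorithm~\ref{alg:kshuffling}, invoke it in its expectation form (per the remark following Assumption~\ref{ass:heterogenity-sequence}), substitute the bound from Lemma~\ref{lemma:variance_hierarchical_shuffling}, and then simplify algebraically. The length of one epoch is $n = Nm$, and the oracle parameters can be taken as $P = \zeta = 0$ since no noisy access to $h_{i,j}$ is assumed.

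First I would check Assumption~\ref{ass:total-bias-sequence}. Since each $\lowr_i$ is a permutation of $[m]$ and is read in consecutive $K$-blocks over the $m/K$ outer iterations, while each $\topr_k$ is a permutation of $[N]$, every pair $(i,j)\in[N]\times[m]$ appears in $\combinedr$ exactly once. Hence $\tfrac{1}{n}\sum_{\ell=1}^n \nabla h_{\combinedr_\ell}(\xx) = \nabla f(\xx)$, so the bias term is zero and Assumption~\ref{ass:total-bias-sequence} holds trivially. For Assumption~\ref{ass:heterogenity-sequence} (in expectation), Lemma~\ref{lemma:variance_hierarchical_shuffling} already supplies the required bound on $\E\sigma_\star^2(\combinedr)$, and the corresponding $M_\star$ factor can be obtained by the same decomposition used in the lemma's proof (splitting $\phi_{\combinedr,k}^2$ into its top-level and within-block contributions and applying Lemma~1 of \citet{mishchenko2020random} at each level), yielding $\E M_\star = O(M)$.

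Next I would plug these into Theorem~\ref{thm:epochgd-convergence}. With $P = \zeta = 0$ the bound collapses to
\begin{equation*}
\cO\!\left(\tfrac{L(M_\star+1)F_0}{T}\right) + \cO\!\left(\left(\tfrac{\sigma_\star L F_0}{nT}\right)^{2/3}\right),
\end{equation*}
and it remains to process the second term. From Lemma~\ref{lemma:variance_hierarchical_shuffling}, $\sqrt{\E \sigma_\star^2} = O\!\left(\sqrt{N}\,K\topsigma + \sqrt{N(m-K)}\,\lowsigma + \sqrt{m}\,\lowsigma\right)$ via $\sqrt{a+b+c}\le\sqrt{a}+\sqrt{b}+\sqrt{c}$. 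Dividing each summand by $nT = NmT$ produces $K\topsigma/(\sqrt{N}\,mT)$, $\sqrt{m-K}\,\lowsigma/(\sqrt{N}\,mT)$, and $\lowsigma/(N\sqrt{m}\,T)$. Applying $(a+b+c)^{2/3} \le a^{2/3}+b^{2/3}+c^{2/3}$ and grouping the first two back together as $(K\topsigma+\sqrt{m-K}\lowsigma)/(\sqrt{N}\,mT)$ reproduces exactly the three terms of~(\ref{eq:kshuffle-bound}), up to Jensen's inequality $\E[X^{2/3}] \le (\E X)^{2/3}$ which lets us pass the expectation inside after bounding $\E\sigma_\star^2$.

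The main obstacle I anticipate is the $M_\star$ estimate: the combined sequence is not a uniform random permutation of length $Nm$, so Lemma~1 of \citet{mishchenko2020random} cannot be applied to $\combinedr$ as a whole. The cleanest workaround is to decompose a partial sum $\sum_{\ell=1}^k(\nabla h_{\combinedr_\ell}-\nabla f)$ into (i) a sum over completed top-level blocks within the current outer iteration, whose randomness stems from a random permutation of $[N]$ acting on $(\nabla f_i-\nabla f)$, and (ii) a residual within-block partial sum over a random permutation of $[m]$ acting on $(\nabla h_{i,j}-\nabla f_i)$; both pieces then fall under the Mishchenko–Richtárik bound and contribute $O(M)$ multiplicative factors, so the final $M_\star$ is indeed $O(M)$ as claimed.
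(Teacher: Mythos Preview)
Your proposal is correct and follows the same route as the paper, which simply states that the corollary ``follows directly from combining Lemma~\ref{lemma:variance_hierarchical_shuffling} and Theorem~\ref{thm:epochgd-convergence}.'' Your verification of Assumption~\ref{ass:total-bias-sequence}, the substitution with $n=Nm$ and $P=\zeta=0$, and the algebraic split of the $\sigma_\star$ term into the three pieces of~(\ref{eq:kshuffle-bound}) are exactly what is needed.

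One remark: your entire discussion of bounding $M_\star$ in terms of $M$ is unnecessary for this corollary as stated. The bound~(\ref{eq:kshuffle-bound}) retains $M_\star$ as a parameter rather than replacing it by $M$, so you do not need to show $\E M_\star = O(M)$; whatever $M_\star$ makes Assumption~\ref{ass:heterogenity-sequence} hold (in expectation) can simply be carried through. The ``main obstacle'' you identify therefore does not arise.
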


In contrast, standard shuffling yields
\begin{equation}
	\label{eq:normalshuffle-bound}
\cO\left( \frac{L(M_\star+1)F_0}{T} + \left(\frac{\combinedsigma LF_0}{\sqrt{Nm}T}\right)^\frac{2}{3}\right)
\end{equation}
When $\combinedsigma \in \Theta(\topsigma + \lowsigma)$,  setting $K$ to $\sqrt{m}$ yields the same bound when using $K$-shuffling as when using simple shuffling. However, for example in federated learning, using $K = \sqrt{m}$ reduces the communication costs by $\sqrt{m}$, making $\sqrt{m}$-shuffling a better alternative to simple shuffling. Moreover, $K$-shuffling can yield improved rates in other cases by tuning $K$. This can be especially observed for the case when either $\topsigma \ggg \lowsigma$ by setting $K = 1$ or $\topsigma \lll \lowsigma$ by setting $K = m$ obtaining a better bound by a factor of $\sqrt{m}$ or $\sqrt{N}$ respectively. In federated learning, these correspond to doing a single local step when workers have a highly heterogeneous distribution while doing $m$ local steps when they are homogeneous.

We verify the effectiveness of using $1$-shuffling in practice in Section~\ref{sec:exp-hierarchical-shuffling}. Here, we continue by comparing running SGD internally, i.e. sampling with replacement from $h_{i, j}$ as defined in Section~\ref{sec:hierarchical_shuffling_motivation},  against shuffling.

In this case, the variance in the lower level becomes part of the noise of the oracle for $f_i$s and therefore decays slower than when using shuffling. On the other hand, the upper bound on the learning rate in Theorem~\ref{thm:epochgd-convergence} becomes larger, allowing faster convergence. In particular, the convergence bound obtained from Theorem~\ref{thm:epochgd-convergence} after $NmT$ steps is
\begin{equation}
	\label{eq:shuffle-just-high-level}
	\cO\left( \frac{L(M_\star+1)F_0}{mT} + \left(\frac{\topsigma LF_0}{\sqrt{N}mT}\right)^\frac{2}{3} + \lowsigma \sqrt{\frac{LF_0}{NmT}}\right) \, .
\end{equation}
In comparison with (\ref{eq:normalshuffle-bound}), it can be seen that the first term and the term containing $\topsigma$ are becoming smaller faster, while the term containing $\lowsigma$ decays slower. This can be preferred when the term containing $\lowsigma$ is much smaller than the other terms. This can especially happen at the early stages of training.  In particular, when $T < \frac{LF_0 \sqrt{nM}}{\lowsigma^2}$, the last term in (\ref{eq:shuffle-just-high-level}) is less than both the first and last term in~(\ref{eq:kshuffle-bound}). 

  This discussion can point to the fact that the best convergence speed might be obtained by using a combination of these cases, possibly starting with running SGD internally with a large learning rate and gradually switching to shuffling while decaying the learning rate.  %
   This resembles the learning rate decay method widely utilized in practice \cite{you2019does}.

\section{Experiments}
\subsection{Finding Good Orders with Algorithm~\ref{alg:greedyalg}}
\label{sec:exp_good_order}

According to the theoretical arguments in Section~\ref{sec:good_order}, it can be observed that the gap between using Algorithm~\ref{alg:greedyalg} and random reshuffling depends on the variance of functions, $\sigma^2$. However, this value is not necessarily large enough for us to observe a large gap in practice especially due to mini-batching which further reduces the variance. In order to avoid this problem, we use a special mini-batching scheme, called \textbf{same-class batching}. 

In same-class batching, data points belonging to each class are separately batched so each batch contains data points from the same class. Moreover, these batches are kept fixed throughout training. This is unlike the \textbf{standard batching} where consecutive elements of a random permutation of the dataset are batched together. The permutation is chosen again at random at the beginning of each epoch. 

Note that if $f_i$ corresponds to the average of loss on data points belonging to the $i$-th class, using same-class batching with batch size $\tau$ would keep the variance between batches equal to $\sigma^2$ while using standard batching would reduce it by a factor of $\tau$. 

Since batch normalization layers \cite{ioffe2015batch} depend on the batch average during training, we suspect that they might introduce side effects when combined with same-class batching and therefore do not use them in our experiments. 

We now train a ResNet-18 without Batch Normalization on CIFAR10 \cite{krizhevsky2009learning}  dataset both with and without same-class batching utilizing both random reshuffling, the standard method widely used for training neural networks, or Algorithm~\ref{alg:greedyalg} to find a good order for the next epoch. We train the models for 200 epochs using SGD optimizer with $0.9$ momentum, decaying the learning rate after epochs $80$, $120$, and $160$ by a factor $0.1$. The initial learning rate is set to $0.1$ for standard batching and to $0.01$ for same-class batching. The experiments are repeated 3 times with different random seeds. The test accuracies at the end of training are reported in Table~\ref{tab:cifar10_single_class_batch}. The trajectory of test accuracy during training is also plotted in Appendix~\ref{app:fig_cifar10_single_class_batch}.  While the Greedy algorithm is able to obtain similar accuracy to RR with standard batching, it significantly outperforms RR when same-class batching is used. 

Algorithm~\ref{alg:greedyalg} requires computing a full gradient which is computationally expensive. Therefore, we additionally consider changing the order every 10 epochs instead of at every epoch. This leads to a small drop in accuracy but reduces the computation cost noticeably.

\begin{table}
	\caption{Test accuracies at the end of training when using RR and Algorithm~\ref{alg:greedyalg} (denoted by Greedy) with standard or same-class batching. Slow Update refers to the case when the order is only updated (using Algorithm~\ref{alg:greedyalg}) every 10 epochs. The average of 3 runs is reported with the standard error stated inside parantheses.}
	\label{tab:cifar10_single_class_batch}
	\centering
	\begin{tabular}{ccc}
		\toprule
		&Same-Class&Standard\\\midrule
		RR&\textbf{92.72(0.16)}&73.46(0.34)\\ %
		Greedy&\textbf{92.67(0.54)}&\textbf{88.43(0.07)}\\
		Slow Update&89.51(0.72)&83.62(0.52)\\\bottomrule
	\end{tabular}
\end{table}

\subsection{Two-Level Shuffling}
\label{sec:exp-hierarchical-shuffling}
We established that using two level shuffling, better convergence bounds can be obtained under mild conditions. In order to show that this is also true in practice, we compare the performance of standard shuffling and two-level shuffling on a set of functions for which we can control $\topsigma$ and $\lowsigma$. We define the global function $f: \R^d \to R$ to be
\[ 
f(\xx) := \frac{1}{2} \lin{A\xx, \xx} + \frac{\lambda}{2} \norm{\xx}^2  \, ,
\]
with  $d = 20$, $\lambda = 0.2$, and band-diagonal matrix $A$ with  $[-\1_{d-1}, 2\1_{d}, -\1_{d-1}] $ on the diagonals. This is a challenging problem without regularization \citep{Nesterov2004:book}. We now define two top-level functions $f_1$ and $f_2$, and subsequently $h_{i, j}$ to be:
\begin{align*}
f_1(\xx) &:= f(\xx) + \topsigma  \lin{\1, \xx}\\
f_2(\xx) &:= f(\xx) - \topsigma \lin{\1, \xx}
\\
h_i, j(\xx) &:=	\begin{cases}
f_i(\xx) + \lowsigma \lin{\1, \xx} & \text{if } j \leq \frac{m}{2}\\
f_i(\xx) - \lowsigma \lin{\1, \xx} &\text{if } j > \frac{m}{2}
		\end{cases}
\end{align*}
We set $\lowsigma = 10$ and compare the number of steps required to obtain accuracy $\norm{\xx}_2 < 0.2$ for different values of $\topsigma$ and $m$, the number of functions corresponding to each $f_i$.  For each pair and each order generating method, we tune the learning rate over the grid $1.1 \cdot \{2^{-1}, \ldots, 2^{-20}\}$ based on performance over 3 runs.  The result is plotted in Figure~\ref{fig:two-level-shuffling}. It can be observed that as the value of $\topsigma$ grows, two-level shuffling clearly outperforms simple shuffling, confirming our theoretical arguments in practice. %

\begin{figure}
	\centering
	\includegraphics[width=.9\linewidth]{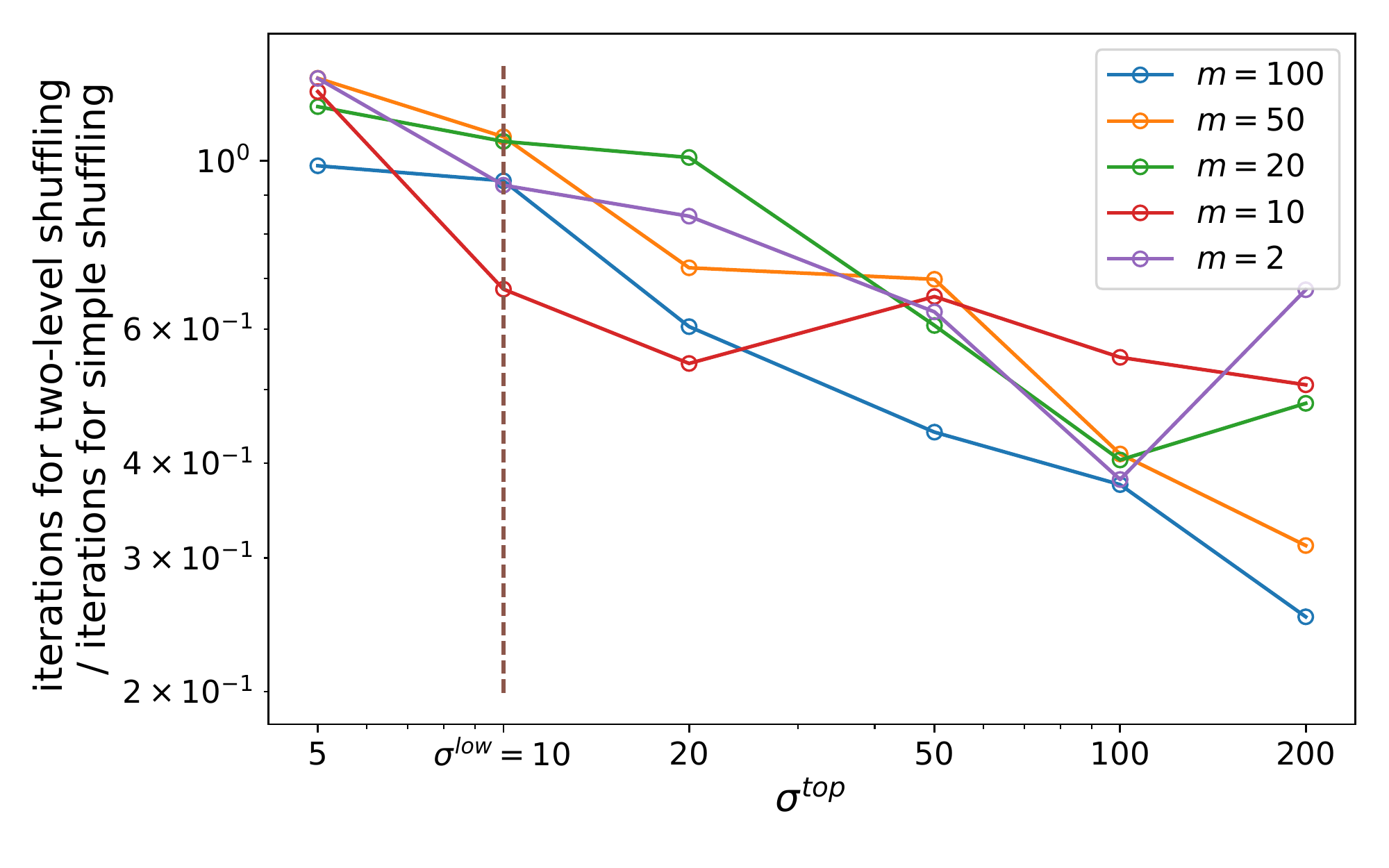}\vspace{-.5cm}
	\caption{Ratio of number of steps required to reach $\norm{\xx}_2 < 0.2$ when using two-level shuffling to when using standard shuffling for different values of $\topsigma$ and $m$. Superiority of two-level shuffling is clear especially for larger values of $\topsigma$.}
	\label{fig:two-level-shuffling}
\end{figure}
  
  \section{Future Work}
  While we have presented the efficacy of Algorithm~\ref{alg:greedyalg}, running this algorithm at epoch is costly. We also showed an alternative by running this algorithm every few epochs but observed a small impact on accuracy. Finding efficient algorithms for finding good orders obtaining the same accuracy is therefore grounds for future work. Exploring the effectiveness of two-level shuffling in other practical scenarios is also of interest. From a theoretical perspective, currently there is a gap between the best lower bound $\cO(\frac{1}{N^3T^2})$ and the best upper bound $\cO(\frac{1}{N^2T^2})$ achievable by optimizing order in our framework. Filling this gap can also be grounds for future work.

\section{Conclusion}
We introduced a measure for quantifying the effect of order on convergence of gradient descent, establishing a framework that yields a convergence bound tailored to any custom ordering. By designing an algorithm for selecting good orders and showing its effectiveness in practice, we exampled the practical usefulness of our framework. Finally, we demonstrated that using structured shuffling can be beneficial, proposing two-level shuffling and showing its superiority in theoretical and practical settings.%

{\small
\bibliographystyle{plainnat}
\bibliography{references}
}

\clearpage
\appendix
\onecolumn

\section{Proof of Theorem~\ref{thm:epochgd-convergence}}
\label{app:proof_epochgd_convergence}
\begin{lemma}
	\label{lemma:mischenko-after-epoch-decrease}
	If Assumptions~\ref{ass:lsmooth}, \ref{ass:oracle-noise}, and \ref{ass:total-bias-sequence} hold, and $\gamma < \frac{1}{2L(n+P)}$, we have 
	\begin{equation}
		\E_{\xi_t} f(\xx_{t + 1})  \leq f(\xx_t) - \frac{\gamma n}{8}\norm{\nabla f(\xx_t)}_2^2 - \frac{\gamma n}{4} \norm{\frac{\gg_t}{n}}_2^2 + \gamma L^2 V_t + \frac{\gamma^2Ln}{2}\zeta^2 \,,
	\end{equation}
	where $V_t =\sum_{i=1}^n\norm{\xx_t - \xx_t^i}_2^2$.
\end{lemma}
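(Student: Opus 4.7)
The plan is to follow and extend \citet{mishchenko2020random}'s epoch-descent argument for random reshuffling, now accommodating the noisy oracle (Assumption~\ref{ass:oracle-noise}) and the weaker sample-bias bound (Assumption~\ref{ass:total-bias-sequence}). I would start by applying the $L$-smoothness descent inequality to the epoch update $\xx_{t+1} = \xx_t - \gamma \gg_t$ and taking expectation over the epoch noise $\xi_t$. Splitting $\gg_t = \hh_t + \bxi_t$ into its noiseless part $\hh_t := \sum_{i=1}^n \nabla f_{r_{t,i}}(\xx_t^i)$ and accumulated noise $\bxi_t := \sum_{i=1}^n \xi_{t,i}$, the zero-mean property of the $\xi_{t,i}$ conditional on the in-epoch filtration $(\cF_i)$ gives $\E\lin{\nabla f(\xx_t), \bxi_t} = 0$, so the inner-product term reduces to $\E\lin{\nabla f(\xx_t), \hh_t}$, to which I apply the polarization identity $-\lin{a,b} = -\tfrac12\norm{a}^2 - \tfrac12\norm{b}^2 + \tfrac12\norm{a-b}^2$ with $a = \nabla f(\xx_t)$ and $b = \hh_t/n$.

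The resulting variance remainder $\tfrac{\gamma n}{2}\E\norm{\nabla f(\xx_t) - \hh_t/n}^2$ is handled by decomposing $\nabla f(\xx_t) - \hh_t/n = -\ee_t + \dd_t$, with $\ee_t := \tfrac{1}{n}\sum_i \nabla f_{r_{t,i}}(\xx_t) - \nabla f(\xx_t)$ the sample bias (satisfying $\norm{\ee_t}^2 \leq \tfrac{1}{4}\norm{\nabla f(\xx_t)}^2$ by Assumption~\ref{ass:total-bias-sequence}) and $\dd_t := \tfrac{1}{n}\sum_i (\nabla f_{r_{t,i}}(\xx_t) - \nabla f_{r_{t,i}}(\xx_t^i))$ the smoothness drift (satisfying $\norm{\dd_t}^2 \leq \tfrac{L^2}{n}V_t$ by Cauchy--Schwarz and $L$-smoothness). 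Young's inequality then gives $\E\norm{\nabla f(\xx_t) - \hh_t/n}^2 \leq \tfrac{1}{2}\norm{\nabla f(\xx_t)}^2 + \tfrac{2L^2}{n}\E V_t$, producing the target contributions $\tfrac{\gamma n}{4}\norm{\nabla f(\xx_t)}^2$ and $\gamma L^2 \E V_t$ after the $\tfrac{\gamma n}{2}$ prefactor.

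The remaining $\tfrac{\gamma^2 L}{2}\E\norm{\gg_t}^2$ I would expand as $\E\norm{\hh_t}^2 + 2\E\lin{\hh_t, \bxi_t} + \E\norm{\bxi_t}^2$. Martingale-difference uncorrelatedness yields $\E\norm{\bxi_t}^2 = \sum_i \E\norm{\xi_{t,i}}^2$, which combined with Assumption~\ref{ass:oracle-noise} and the smoothness estimate $\norm{\nabla f(\xx_t^i)}^2 \leq 2\norm{\nabla f(\xx_t)}^2 + 2L^2\norm{\xx_t - \xx_t^i}^2$ gives $\E\norm{\bxi_t}^2 \leq n\zeta^2 + 2nP\norm{\nabla f(\xx_t)}^2 + 2PL^2 \E V_t$; the $\tfrac{\gamma^2 L}{2}$-scaled contribution delivers exactly the target $\tfrac{\gamma^2 Ln}{2}\zeta^2$. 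The $\E\norm{\hh_t}^2 = n^2 \E\norm{\hh_t/n}^2$ piece combines with $-\tfrac{\gamma n}{2}\E\norm{\hh_t/n}^2$ from the polarization step, and the condition $\gamma L n \leq \tfrac{1}{2}$ (implied by $\gamma < \tfrac{1}{2L(n+P)}$) produces the claimed $-\tfrac{\gamma n}{4}\norm{\gg_t/n}^2$ descent term. The residual positive terms of order $\gamma^2 LnP\norm{\nabla f(\xx_t)}^2$ and $\gamma^2 L^3 P \E V_t$ are absorbed into the descent coefficients by the complementary half $\gamma L P \leq \tfrac{1}{2}$ of the stepsize bound, tightening the $\norm{\nabla f(\xx_t)}^2$ coefficient from $-\tfrac{\gamma n}{4}$ to the claimed $-\tfrac{\gamma n}{8}$.

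The principal technical hurdle will be rigorously controlling the cross term $\E\lin{\hh_t, \bxi_t}$: index pairs $(i, j)$ with $j \geq i$ vanish in expectation by a tower argument on $\cF_{j-1}$, while pairs $j < i$ require a Lipschitz-in-$\xi_{t,j}$ expansion of $\nabla f_{r_{t,i}}(\xx_t^i)$, contributing a term of order $\gamma L n \zeta^2$ that becomes $O(\gamma^3)$ after the $\tfrac{\gamma^2 L}{2}$ prefactor and is absorbed by the stepsize.
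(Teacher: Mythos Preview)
Your proposal follows the paper's proof essentially verbatim: apply the $L$-smoothness descent inequality across the epoch, use the polarization identity on $\lin{\nabla f(\xx_t),\hh_t/n}$, and split the variance remainder $\norm{\nabla f(\xx_t)-\hh_t/n}^2$ into the sample-bias piece (bounded by Assumption~\ref{ass:total-bias-sequence}) and the smoothness-drift piece (giving $\tfrac{2L^2}{n}V_t$), then absorb the $P$-dependent noise contribution using $\gamma L(n+P)<\tfrac12$. Two minor points of divergence: (i) in the paper $\gg_t$ is \emph{defined} as the noiseless sum $\sum_i\nabla f_{r_i}(\xx_t^i)$, i.e.\ your $\hh_t$, so the claimed $-\tfrac{\gamma n}{4}\norm{\gg_t/n}^2$ term is exactly what your polarization-plus-quadratic step already delivers; (ii) the paper simply invokes independence of the $\xi_t^i$ to drop the cross term and to write $\E\norm{\sum_i\xi_t^i}^2=\sum_i\E\norm{\xi_t^i}^2$, bounding each $\E\norm{\xi_t^i}^2$ directly by $\zeta^2+P\norm{\nabla f(\xx_t)}^2$---so your martingale/Lipschitz treatment of $\E\lin{\hh_t,\bxi_t}$ and your detour through $\norm{\nabla f(\xx_t^i)}^2$ are more scrupulous than what the paper itself does.
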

\begin{proof}
	Let $g_t := \sum_{i=1}^n \nabla f_{r_i}(\xx_t^i)$ and $\xi_t$ represent the sequence of noises for the $t$-th epoch.
	\begin{align*}
		\E_{\xi_t} f(\xx_{t + 1}) & \leq \E_{\xi_t} f(\xx_t) + \E_{\xi_t} \lin{\nabla f(\xx_t), x_{t + 1} - x_t} + \E_{\xi_t}\frac{L}{2}\norm{\xx_{t+1} - \xx_t}\\
		& = f(\xx_t) - \gamma n \lin{\nabla f(\xx_t), \frac{\gg_t}{n}} + \frac{\gamma^2Ln^2}{2}\norm{\frac{\gg_t}{n}}^2 + \frac{\gamma^2L}{2}\norm{\sum_{i=1}^n\xi_t^i}_2^2\\
		& \stackrel{(1)}{=} f(\xx_t) - \gamma n \lin{\nabla f(\xx_t), \frac{\gg_t}{n}} + \frac{\gamma^2Ln^2}{2}\norm{\frac{\gg_t}{n}}^2 + \frac{\gamma^2L}{2}\sum_{i=1}^n\norm{\xi_t^i}_2^2\\
		& = f(\xx_t) - \frac{\gamma n}{2}\big(\norm{\nabla f(\xx_t)}_2^2 + \norm{\frac{\gg_t}{n}}_2^2 - \norm{\nabla f(\xx_t) - \frac{\gg_t}{n}}_2^2\big) + \frac{\gamma^2Ln^2}{2}\norm{\frac{\gg_t}{n}}_2^2 \\ & \, \, \, \, \, \,  + \frac{\gamma^2L}{2}\sum_{i=1}^n\norm{\xi_t^i}_2^2\\
		& = f(\xx_t) - \frac{\gamma n}{2}\norm{\nabla f(\xx_t)}_2^2 - \frac{\gamma n}{2} (1 - L\gamma n) \norm{\frac{\gg_t}{n}}_2^2 + \frac{\gamma n}{2} \norm{\nabla f(\xx_t) - \frac{\gg_t}{n}}_2^2\\ & \, \, \, \, \, \,  + \frac{\gamma^2L}{2}\sum_{i=1}^n\norm{\xi_t^i}_2^2\\
		& \leq  f(\xx_t) - \frac{\gamma n}{2} (1 - \gamma L P)\norm{\nabla f(\xx_t)}_2^2 - \frac{\gamma n}{2} (1 - L\gamma n) \norm{\frac{\gg_t}{n}}_2^2 + \frac{\gamma n}{2} \norm{\nabla f(\xx_t) - \frac{\gg_t}{n}}_2^2\\ & \, \, \, \, \, \,  + \frac{\gamma^2Ln}{2}\zeta^2\\
	\end{align*}
	where in (1) we used the independence of $\zeta_t^i$. We now proceed by bounding the distance of average of updates to the true gradient:
	\begin{align*}
		\norm{\nabla f(\xx_t) - \frac{\gg_t}{n}}_2^2 &= \norm{\frac{1}{n}\sum_{i=1}^n [\nabla f_{r_i}(\xx_t) - \nabla f_{r_i}(\xx_t^i)] + \frac{1}{n}\sum_{i=1}^n [\nabla f_{r_i}(\xx_t) - \nabla f(\xx_t)]}_2^2\\
		& \leq 2 \norm{\frac{1}{n}\sum_{i=1}^n [\nabla f_{r_i}(\xx_t) - \nabla f_{r_i}(\xx_t^i)]}_2^2 + 2\norm{\frac{1}{n}\sum_{i=1}^n [\nabla f_{r_i}(\xx_t) - \nabla f(\xx_t)]}_2^2\\
		& \leq \frac{2}{n} \sum_{i=1}^n \norm{\nabla f_{r_i}(\xx_t) - \nabla f_{r_i}(\xx_t^i)}_2^2 + 2\phi_n^2(\xx_t)\\
		& \leq \frac{2}{n}\sum_{i=1}^nL^2\norm{\xx_t - \xx_t^i}_2^2 + 2\phi_n^2 \\
		&  = \frac{2L^2}{n}V_t + 2\phi_n^2 (\xx_t)\\
		& \leq \frac{2L^2}{n}V_t + \frac{1}{4} \norm{\nabla f(\xx_t)}_2^2
	\end{align*}
	Where the last inequality holds because of Assumption~\ref{ass:total-bias-sequence}. Using $\gamma < \frac{1}{2L(n+P)}$ and applying the bound we derived above, we get
	\begin{align*}
		\E_{\xi_t} f(\xx_{t + 1}) & \leq f(\xx_t) - \frac{\gamma n}{8}\norm{\nabla f(\xx_t)}_2^2 - \frac{\gamma n}{4} \norm{\frac{\gg_t}{n}}_2^2 + \gamma L^2 V_t + \frac{\gamma^2Ln}{2}\zeta^2\\
	\end{align*}
\end{proof}

\begin{lemma}
	\label{lemma:bounding-vt}
	If Assumptions~\ref{ass:lsmooth}, \ref{ass:oracle-noise}, and \ref{ass:heterogenity-sequence} are satisfied, and $\gamma \leq \frac{1}{2L(n+P)}$
	\begin{equation}
		\sum_{k=1}^n \norm{\xx_t^k - \xx_t}_2^2 \leq 4 \gamma^2 n^3 \norm{f(\xx_t)}_2^2 + 4 \gamma^2n\sigma_\star^2
	\end{equation}
\end{lemma}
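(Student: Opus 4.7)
The plan is to bound the epoch-wise drift $\norm{\xx_t^k - \xx_t}^2$ by a self-referential inequality and then sum over $k$. First I would unroll the inner loop of Algorithm~\ref{alg:epochgd}, writing
$$\xx_t^k - \xx_t = -\gamma \sum_{i=1}^{k-1}\bigl[\nabla f_{r_{t,i}}(\xx_t^i) + \xi_t^i\bigr],$$
and then decompose each summand additively as
$$\nabla f_{r_{t,i}}(\xx_t^i) = \nabla f(\xx_t) + \bigl[\nabla f_{r_{t,i}}(\xx_t) - \nabla f(\xx_t)\bigr] + \bigl[\nabla f_{r_{t,i}}(\xx_t^i) - \nabla f_{r_{t,i}}(\xx_t)\bigr].$$
This splits $\xx_t^k - \xx_t$ into four pieces: a directional term $-\gamma (k-1) \nabla f(\xx_t)$, a permutation-bias term whose squared norm is exactly $\phi_{r_t,k-1}^2(\xx_t)$, a smoothness-controlled term, and an oracle-noise term $-\gamma \sum_{i=1}^{k-1}\xi_t^i$.

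Next I would apply $\norm{a+b+c+d}^2 \leq 4(\norm{a}^2+\norm{b}^2+\norm{c}^2+\norm{d}^2)$ and bound each piece: Assumption~\ref{ass:heterogenity-sequence} controls the bias piece by $\sigma_\star^2 + (k-1)^2 M_\star \norm{\nabla f(\xx_t)}^2$; Cauchy--Schwarz combined with $L$-smoothness (Assumption~\ref{ass:lsmooth}) yields $L^2 (k-1) \sum_{i=1}^{k-1}\norm{\xx_t^i - \xx_t}^2$ on the smoothness piece; and (after taking expectation where needed) Assumption~\ref{ass:oracle-noise} handles the noise piece via independence. Summing from $k=1$ to $n$, the double sum in the smoothness contribution collapses, after swapping the order of summation, to at most $\tfrac{n(n-1)}{2}\sum_{i=1}^{n-1}\norm{\xx_t^i-\xx_t}^2 \leq \tfrac{n^2}{2} V_t$, producing an inequality of the schematic form
$$V_t \leq c_1 \gamma^2 n^3 \norm{\nabla f(\xx_t)}^2 + c_2 \gamma^2 n \sigma_\star^2 + c_3 (\gamma L n)^2 V_t + (\text{terms in } M_\star, P, \zeta).$$

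Finally, under the stepsize restriction $\gamma \leq \frac{1}{2L(n+P)}$ we have $\gamma L n \leq \tfrac{1}{2}$, so the $V_t$ term on the right can be absorbed into the left; a rearrangement then yields the claimed $V_t \leq 4\gamma^2 n^3 \norm{\nabla f(\xx_t)}^2 + 4\gamma^2 n \sigma_\star^2$. The main obstacle will be the constant bookkeeping: ensuring that the auxiliary $M_\star$, $P$, and $\zeta$ contributions are either folded into the $\sigma_\star^2$ and $\norm{\nabla f(\xx_t)}^2$ terms or properly killed by the step-size restriction without inflating the clean prefactors of $4$. Structurally, however, this is the standard local-SGD drift recursion, adapted to the permutation setting by replacing the usual dataset-heterogeneity bound with the order-aware quantity $\phi_{r_t,k-1}^2(\xx_t)$ from Assumption~\ref{ass:heterogenity-sequence}.
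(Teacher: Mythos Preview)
Your proposal is correct and essentially identical to the paper's proof: the paper also unrolls the inner loop, splits the drift into a smoothness-controlled piece, the $\phi$-bias piece (handled via Assumption~\ref{ass:heterogenity-sequence}), the directional piece $(k-1)\nabla f(\xx_t)$, and the noise piece, then sums over $k$ and absorbs the recursive $V_t$ term on the right using $\gamma \leq \frac{1}{2L(n+P)}$. The only cosmetic difference is that the paper first peels off the noise via independence and then applies successive $2$-way splits rather than a single $4$-way split; your caution about constant bookkeeping is well placed, since the paper's own last line of the proof in fact carries residual $M_\star$, $P$, and $\zeta^2$ terms that are silently dropped from the lemma statement as written.
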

\begin{proof}
	\begin{align*}
		\norm{\xx_t^k - \xx_t}_2^2 & = \gamma^2 \norm{\sum_{i=1}^{k - 1} \nabla f_{r_i}(\xx_t^i)}_2^2 + \gamma^2 \norm{\sum_{i=1}^{k-1}\xi_t^i}_2^2 \\
		&\leq \gamma^2 \norm{\sum_{i=1}^{k - 1} \nabla f_{r_i}(\xx_t^i)}_2^2 + \gamma^2  P  \sum_{i=1}^{k - 1}\norm{\nabla f(\xx_t^i)}_2^2 + \gamma^2 (k - 1) \zeta^2\\
		&\leq 2\gamma^2 \norm{\sum_{i=1}^{k - 1} \nabla f_{r_i}(\xx_t^i) - \nabla f_{r_i}(\xx_t)}_2^2 + 2\gamma^2 \norm{\sum_{i=1}^{k - 1} \nabla f_{r_i}(\xx_t)}_2^2  + \\&\ \ \ \ \ 2\gamma^2  P\sum_{i=1}^{k-1} \norm{\nabla f(\xx_t^i) - \nabla f(\xx_t)}_2^2 + 2\gamma^2  P  \sum_{i=1}^{k-1}\norm{\nabla f(\xx_t)}_2^2  + \gamma^2 (k - 1) \zeta^2\\\\
		&\leq 2\gamma^2  (k - 1 + P)L^2 \sum_{i = 1}^{k - 1} \norm{\xx_t^i - \xx_t}_2^2 + 2\gamma^2\norm{\sum_{i=1}^{k - 1} \nabla f_{r_i}(\xx_t)}_2^2  + 2\gamma^2 P (k - 1) \norm{\nabla f(\xx_t)}_2^2 + \gamma^2 (k - 1) \zeta^2\\\\
		&\leq 2\gamma^2 (k - 1 + P)L^2 \sum_{i = 1}^n \norm{\xx_t^i - \xx_t}_2^2 + 2\gamma^2\norm{\sum_{i=1}^{k - 1} \nabla f_{r_i}(\xx_t)}_2^2  + 2\gamma^2 P (k - 1) \norm{\nabla f(\xx_t)}_2^2 + \gamma^2 n \zeta^2\\
	\end{align*}
	In order to bound the second term we can write
	\begin{align*}
		\norm{\sum_{i=1}^k \nabla f_{r_i}(\xx_t)}_2^2 &= \norm{(k - 1) \nabla f(\xx_t) + \sum_{i = 1}^{k - 1} (\nabla f_{r_i}(\xx_t) - \nabla f(\xx_t))}_2^2 \\
		& \leq 2(k - 1)^2 \norm{\nabla f(\xx_t)}_2^2 + 2\norm{\sum_{i=1}^{k - 1} (\nabla f_{r_i}(\xx_t) - \nabla f(\xx_t))}_2^2\\
		& \leq 2(k - 1)^2 (1 + M_\star)\norm{\nabla f(\xx_t)}_2^2 + 2\sigma_\star^2
	\end{align*}
	
	Putting the result back we get:
	\begin{align*}
		\norm{\xx_t^k - \xx_t}_2^2 &\leq 2\gamma^2 (k - 1 + P)L^2 \sum_{i = 1}^n \norm{\xx_t^i - \xx_t} + 4\gamma^2(k - 1)(k - 1 + \frac{P}{1 + M_\star})(1 + M_\star)\norm{\nabla f(\xx_t)}_2^2 \\ & \ \ \ \ \  + 4\gamma^2\sigma_\star^2 + \gamma^2 n \zeta^2
	\end{align*}
	
	Denoting $V_t := \sum_{k=1}^n \norm{\xx_t^k - \xx_t}_2^2$ and summing up for all $k$ we get:
	\begin{align*}
		V_t &\leq \gamma^2 n(n - 1 + 2P)L^2 V_t + \frac{2}{3}\gamma^2 (1+M_\star)(2n + 3\frac{P}{1+M_\star} -1)n(n - 1) \norm{\nabla f(\xx_t)}_2^2 + 2\gamma^2 n\sigma_\star^2  + \gamma^2 n^2 \zeta^2
	\end{align*}
	Therefore, applying $\gamma < \frac{1}{2L(n+P)}$, we get $\gamma^2 n(n - 1 + 2P)L^2 < \frac{1}{2}$ which yields
	\begin{align*}
		V_t &\leq \frac{4}{3}\gamma^2 (1+M_\star)(2n + 3\frac{P}{1+M_\star} - 1)n(n - 1) \norm{\nabla f(\xx_t)}_2^2 + 4\gamma^2 n\sigma_\star^2\ + 2\gamma^2 n^2 \zeta^2\\
		&\leq \frac{4}{3}\gamma^2 n^2(n + nM_\star + P)\norm{\nabla f(\xx_t)}_2^2 + 4\gamma^2 n\sigma_\star^2\ + 2\gamma^2 n^2 \zeta^2\,. \qedhere
	\end{align*}
\end{proof}

\begin{proof}[Proof of Theorem~\ref{thm:epochgd-convergence}]
	Using the bound of Lemma~\ref{lemma:bounding-vt} to the result of Lemma~\ref{lemma:mischenko-after-epoch-decrease} and applying $\gamma < \frac{1}{8L(n+nM_\star+P)}$ we get
	\begin{align*}
		\E_{\xi_t} f(\xx_{t+1}) & \leq f(\xx_t) - \frac{\gamma n}{8} (1 - \frac{32}{3}\gamma^2L^2n(n+nM_\star+P)) \norm{\nabla f(\xx_t)}_2^2
		- \frac{\gamma n}{4} \norm{\frac{\gg_t}{n}}_2^2 \\  & \ \ \ \ \ 
		+ \frac{\gamma^2Ln}{2}(1+4\gamma Ln)\zeta^2
		+ 4\gamma^3 L^2 n \sigma_\star^2\\
		& \leq \E f(\xx_t) - \frac{\gamma n}{12} \norm{\nabla f(\xx_t)}_2^2
		+ \gamma^2 Ln\zeta^2 
		+ 4\gamma^3 L^2 n \sigma_\star^2
	\end{align*}
	Rearranging we get:
	\begin{align*}
		\frac{1}{12} \cdot \norm{\nabla f(\xx_t)}_2^2 \leq \frac{f(\xx_t) - \E_{\xi_t} f(\xx_{t+1}) }{\gamma n} + \gamma L \zeta^2 + 4 \gamma^2L^2\sigma_\star^2
	\end{align*}
	We now denote $F_0 := F(\xx_0) - f*$ and take the average over all the steps while taking the expectation over $\xi_t$ for all $t$. This yields:
	\begin{align*}
		\frac{1}{12T} \sum_{i=1}^T \norm{\nabla f(\xx_t)}_2^2 \leq \frac{F_0}{\gamma n T} + \gamma L \zeta^2 + 4\gamma^2L^2 \sigma_\star^2
	\end{align*}
	Finding a good learning rate using Lemma 17 of \cite{koloskova2020unified} we get:
	\begin{align*}
		&\cO \left(\zeta\sqrt{ \frac{LF_0}{nT}} + \left(\frac{\sigma_\star LF_0}{nT}\right)^\frac{2}{3} + \frac{L(M_\star+\frac{P}{n}+ 1)F_0}{T}\right)\,. \qedhere
	\end{align*}
\end{proof}

\section{Proof of Lemma~\ref{lemma:variance_hierarchical_shuffling}}
\label{app:proof_variance_hierarchical_shuffling}
\begin{proof}
	
	\def\neff{n_{\text{eff}}}
	For brevity, let us misuse the notation and right $h_{\topr_i, \lowr_j}$ instead of $h_{\topr_i, \lowr_{\topr_i, j}}$. Also, let us define $r0$ and $r1$ such that $\combinedr_i =: (r0_i, r1_i)$. We can now write:
	
	\begin{align*}
		\E_{\combinedr} \phi_{\combinedr, (aN + b)K + c}^2(\xx) & = \E_{\combinedr}\norm{\sum_{i=1}^{(aN + b)K + c}\nabla h_{\combinedr_i}(\xx) - \nabla f(\xx)}_2^2 \\
		& \stackrel{(A)}{=} \E_{\combinedr}\norm{\sum_{i=1}^{(aN + b)K + c}\nabla h_{\combinedr_i}(\xx) - \nabla f_{r0_i}(\xx)}_2^2 + \E_{\topr}\norm{\sum_{i=1}^{(aN + b)K + c}\nabla f_{r0_i}(\xx) - \nabla  f(\xx)}_2^2 \\		
		& = \E_{\combinedr}\norm{\sum_{i=1}^{(aN+ b)K + c}\nabla h_{\combinedr_i}(\xx) - \nabla f_{r0_i}(\xx)}_2^2 \\&\ \ \ \ \ + \E_{\topr_a}\norm{c(\sum_{j=1}^b\nabla f_{\topr_{a,j}}(\xx) - \nabla  f(\xx)) + (\sum_{j=1}^{b-1}\nabla f_{\topr_{a,j}}(\xx) - \nabla  f(\xx)) }_2^2 \\
			& \leq \E_{\combinedr}\norm{\sum_{i=1}^{(aN + b)K + c}\nabla h_{\combinedr_i}(\xx) - \nabla f_{r0_i}(\xx)}_2^2 + 2\E_{\topr_a}c^2\phi_{\topr_a, b}+\phi_{\topr_a,b-1}\\
			& \leq  \E_{\combinedr}\norm{\sum_{i=1}^{(aN + b)K + c}\nabla h_{\combinedr_i}(\xx) - \nabla f_{r0_i}(\xx)}_2^2 + 2((K-1)^2+1)\E_{\topr_a}\sigma_\star^2(\topr_a)\\
			& \leq  \E_{\combinedr}\norm{\sum_{i=1}^{(aN + b)K + c}\nabla h_{\combinedr_i}(\xx) - \nabla f_{r0_i}(\xx)}_2^2 + 2NK^2\topsigma^2\\
			& \leq  \E_{\combinedr}\lVert\sum_{i=1}^{b} \sum_{j=1}^{(a+1)K} \nabla h_{\topr_{a,i}, \lowr_{j}}(\xx) - \nabla f_{\topr_{a,i}} 
				\\& \qquad \qquad + \sum_{j=1}^{aK+c} \nabla h_{\topr_{a,b+1}, \lowr_{j}}(\xx) - \nabla f_{\topr_{a,b+1}} 
				\\& \qquad \qquad +\sum_{i=b+2}^{N} \sum_{j=1}^{aK} \nabla h_{\topr_{a,i}, \lowr_{ j}}(\xx) - \nabla f_{\topr_{a,i}} \rVert_2^2 \\& \ \ \ \ \ + 2NK^2\topsigma^2\\
			& \stackrel{(B)}{=} \sum_{i=1}^b \E_{\combinedr} \phi_{\lowr_{\topr_{a, i}}, (a+1)K} + \E_{\combinedr} \phi_{\lowr_{\topr_{a,b+1}},aK+c} + \sum_{i=b+2}^N\E_{\combinedr} \phi_{\lowr_{\topr{a,i}},aK} + 2NK^2\topsigma^2\\
			&\leq \lowsigma^2 (b \frac{(a+1)K(m-(a+1)K)}{m-1} +   \\&\qquad\qquad \frac{(aK+c)(m-(aK+c))}{m-1} +  \\&\qquad\qquad  (N-b-1)\frac{aK(m-aK)}{m-1}) \\&\ \ \ \ \ + 2NK^2\topsigma^2\\
			&\stackrel{(C)}{\leq} \lowsigma^2 (N \frac{m(m-K)}{m-1} +  \frac{(c)(m-c)}{m-1})   \\&\ \ \ \ \  + 
			 \lowsigma^2 (N \frac{m(m-K)}{m-1} +  \frac{m(m-K)}{m-1} + N\frac{m(m-K)}{m-1})
			\\&\ \ \ \ \ + 2NK^2\topsigma^2\\
			& \leq 3m (1 + N(1 - \frac{K-1}{m-1})) \lowsigma^2 + 2NK^2\topsigma^2
	\end{align*}

	where in (A) and (B) we used the randomness of $\lowr$ and $\topr$. (C) can be obtained by separating the cases when $a = 0$ and $a \geq 1$ and noticing $ a \leq \frac{m}{K} - 1$, $b \leq N - 1$ and $c \leq K - 1$.
\end{proof}

\section{Lower bound on $\sigma_*$}
\label{app:lower_bound_sigmastar}
Using Assumption~\ref{ass:heterogenity-sequence}, we can write:
\begin{align*}
	\norm{\nabla f_{r_k}(\xx) - \nabla f(\xx)}_2^2 & = \left\|\sum_{i=1}^k (\nabla f_{r_i}(\xx) - \nabla f(\xx)) - \sum_{i=1}^{k - 1}(\nabla f_{r_i} - \nabla f(\xx))\right\|_2^2\\
	& \leq 2\phi_{r,k}^2 + 2\phi_{r, k -1}^2 \\
	& \leq 4 \sigma_*^2 + 4k^2M_*\norm{\nabla f(\xx)}_2^2 \\
	& \leq 4\sigma_*^2 + 4n^2M_*\norm{\nabla f(\xx)}_2^2
\end{align*}
This means that Assumption~\ref{ass:heterogenity} should hold for some $\sigma^2 \leq 4\sigma_*^2$ which in turn means $\sigma_* \in \Omega(\sigma^2)$.

\section{Test Accuracy During Training With and Without Algorithm~\ref{alg:greedyalg}}
\label{app:fig_cifar10_single_class_batch}
\begin{figure}[H]
		\centering
		\includegraphics[width=.6\linewidth]{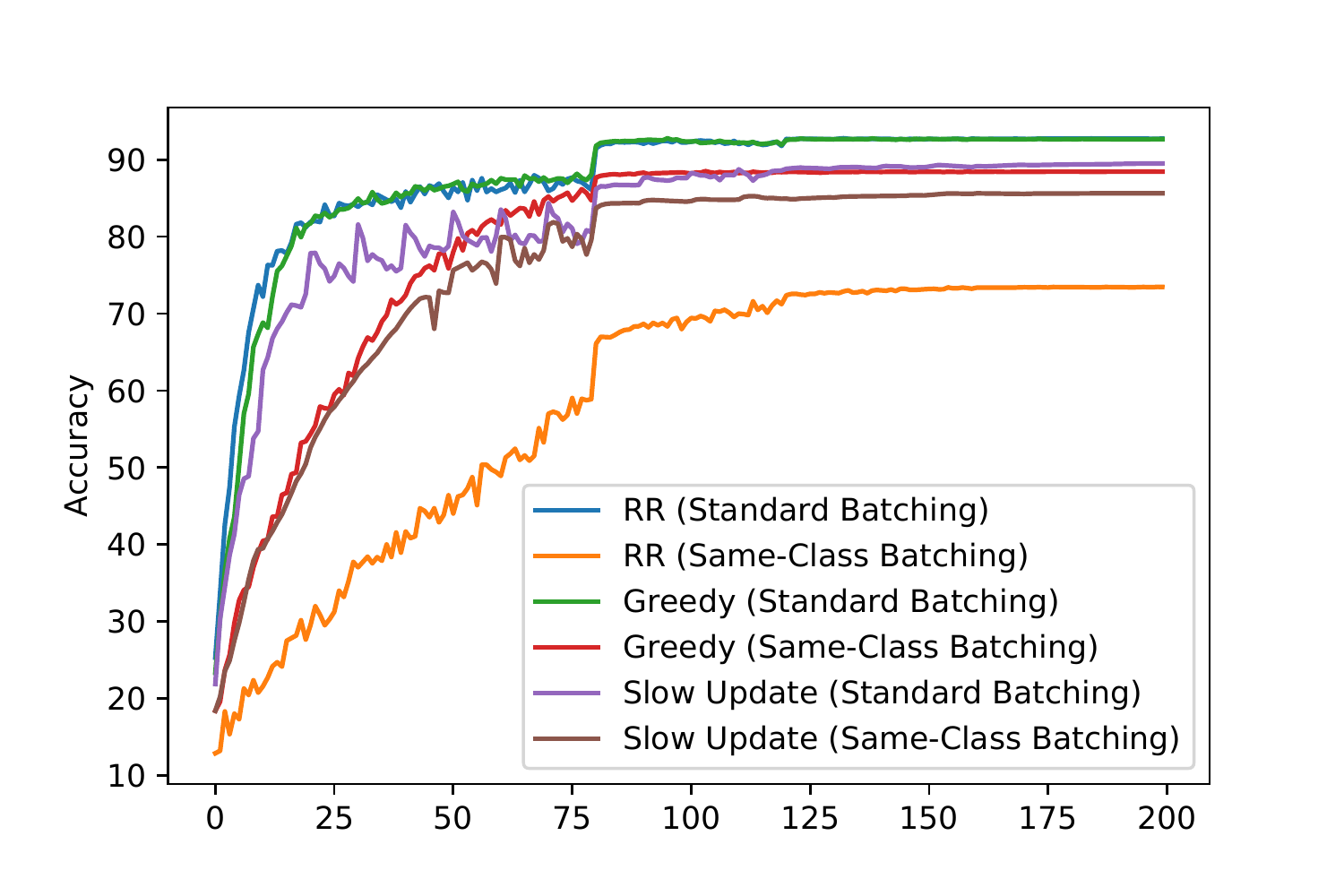}
		\caption{Test accuracy plot of using random-reshuffling and the greedy algorithm with same-class batching. Greedy refers to Algorithm~\ref{alg:greedyalg} while RR refers to Random Reshuffling. Algorithm~\ref{alg:greedyalg} clearly outperforms RR even when the order is only updated every 10 epochs (denoted by Slow Update).}
		\label{fig:cifar10_single_class_batch}
	\end{figure}

\end{document}